\documentclass{article}

     \usepackage[final]{neurips_2024}

\usepackage[utf8]{inputenc} % allow utf-8 input
\usepackage[T1]{fontenc}    % use 8-bit T1 fonts
\usepackage{amsthm}
\usepackage{hyperref}       % hyperlinks
\usepackage{url}            % simple URL typesetting
\usepackage{booktabs}       % professional-quality tables
\usepackage{amsfonts}       % blackboard math symbols
\usepackage{nicefrac}       % compact symbols for 1/2, etc.
\usepackage{microtype}      % microtypography
\usepackage{xcolor}         % colors
\usepackage{todonotes}
\usepackage{soul}
\usepackage{thm-restate}

% packages not in the proposed tex from neurips website
\usepackage{amsmath}

\usepackage[noend]{algpseudocode}
\usepackage{algorithm}
\usepackage{mdframed}
\usepackage{tikz}
\usepackage{cleveref}
\usepackage{mathtools}
\usepackage{tikz}
\usetikzlibrary{decorations.pathreplacing,calc}

\usepackage{color}              % Need the color package
\usepackage{xcolor}
\usepackage{color-edits}
\addauthor{Will}{blue}
\addauthor{Andreas}{olive}

\usepackage{xspace}

\setlength{\belowcaptionskip}{-14pt}

\DeclareMathOperator*{\argmax}{argmax}

\DeclarePairedDelimiter{\abs}{\lvert}{\rvert}

\newcommand{\Ind}[1]{\operatorname{\mathbb{I}}\left\{ #1 \right\}}

\newcommand\floor[1]{\lfloor#1\rfloor}

% probability commands
\newcommand{\Expect}[1]{{ {\bf E} \normalfont{ \left[ #1 \right]}}}
\newcommand{\Prob}[1]{{  P \normalfont{ \left[ #1 \right]}}}

\newtheorem{theorem}{Theorem}

\newtheorem{lemma}[theorem]{Lemma}

\Crefname{theorem}{Theorem}{Theorems}
\Crefname{definition}{Definition}{}
\Crefname{lemma}{Lemma}{Lemmas}
\Crefname{claim}{Claim}{Claims}
\Crefname{observation}{Observation}{Observations}
\Crefname{figure}{Figure}{Figures}
\crefname{equation}{Equation}{Equations}

 %[section]

\newcommand{\eps}{\varepsilon} % (ε) 
 % (ϵ)

% caligraphig letters

\newcommand{\cM}{\mathcal{M}}
\newcommand{\cA}{\mathcal{A}}
\newcommand{\cF}{\mathcal{F}}

\newcommand{\cE}{\mathcal{E}}
\newcommand{\cC}{\mathcal{C}}
\newcommand{\cI}{\mathcal{I}}

\newcommand{\cT}{\mathcal{T}}

% tilde letters
% caligraphig letters

%commands for the Fair algorithms with unfair predictions paper
\newcommand{\pegging}{\textsc{Pegging}\xspace}
\newcommand{\kpegging}{$k$-\textsc{Pegging}\xspace}
\newcommand{\addpegging}{\textsc{Additive-Pegging}\xspace}
\newcommand{\mulpegging}{\textsc{Multiplicative-Pegging}\xspace}

% this is for the analysis
\newcommand\pegfunc[1]{\mathrm{peg}\left(#1\right)}
\newcommand\peginvfunc[1]{\mathrm{peg^{-1}}\left(#1\right)}

\newcommand{\ver}[1]{r_{#1}}

\newcommand{\hu}{\hat{u}}
\newcommand{\tu}[1]{u_{\ver{i}}}
\newcommand{\thu}[1]{\hu_{\ver{#1}}}
\newcommand{\hi}{{\hat{\imath}}}
\newcommand{\ti}{{\tilde{\imath}}}
\newcommand{\iPegged}{{i^{\mathsf{pegged}}}}
\newcommand{\ssi}{{i^*}}

\newcommand{\et}[1]{\eps_{t_{#1}}}

\DeclareMathOperator{\predErr}{\varepsilon}

\title{Fair Secretaries with Unfair Predictions}

\author{%
  Eric Balkanski \\
  Columbia University \\
  \texttt{eb3224@columbia.edu} \\
  \And
  Will Ma \\
  Columbia University \\
  \texttt{wm2428@gsb.columbia.edu} \\
  \AND
  Andreas Maggiori \\
  Columbia University \\
  \texttt{am6292@columbia.edu} \\
}

\begin{document}

\maketitle
\begin{abstract}
Algorithms with predictions is a recent framework for decision-making under uncertainty that leverages the power of machine-learned predictions without making any assumption about their quality. The goal in this framework is for algorithms to achieve an improved performance when the predictions are accurate while maintaining acceptable guarantees when the predictions are erroneous. A serious concern with algorithms that use predictions is that  these predictions can be biased and, as a result, cause the algorithm to make decisions that are deemed unfair. We show that this concern manifests itself in the classical secretary problem in the learning-augmented setting---the state-of-the-art algorithm can have zero probability of accepting the best candidate, which we deem unfair, despite promising to accept a candidate whose expected value is at least $\max\{\Omega (1) , 1 - O(\predErr)\}$ times the optimal value, where $\predErr$ is the prediction error.
We show how to preserve this promise while also guaranteeing to accept the best candidate with probability $\Omega(1)$.  Our algorithm and analysis are based on a new ``pegging'' idea that diverges from existing works and simplifies/unifies some of their results. Finally, we extend to the $k$-secretary problem and complement our theoretical analysis with experiments.
\end{abstract}
\section{Introduction}\label{sec: Introduction}

As machine learning algorithms are increasingly used in socially impactful decision-making applications, the fairness of those algorithms has become a primary concern. 
 Many algorithms deployed in recent years have been shown to be explicitly unfair or reflect bias that is present in training data. Applications where automated decision-making algorithms have been used and fairness is of central importance include loan/credit-risk evaluation  \citep{FairLoanExample1, FairLoanExample2, FairLoanExample3}, hiring \citep{FairHiringExample1, FairHiringExample2}, recidivism evaluation \cite{FairJusticeExample1, FairJusticeExample2, FairJusticeExample3, FairJusticeExample4, COMPASSoftware},  childhood welfare systems \cite{Fairhealthexample}, job recommendations \cite{Fairrecommendationexample}, price discrimination  \cite{PriceDiscriminationwithFairnessConstraints}, resource allocation \cite{ManshadiNiazadehRodilitz2023}, and others \cite{Faibeautycontestexample, Fairblinkingexample, Fairrobotsexample, balseirofairnessandbeyond, Groupfairnessinmatching}.
A lot of work in recent years has been devoted to formally defining different notions of fairness \cite{Fairnessnotionexample1, Fairnessnotionexample2, Fairnessnotionexample3, Fairnessnotionexample4, Fairnessnotionexample5, Fairnessnotionexample6, Fairnessnotionexample7}, designing algorithms that satisfy these different definitions \cite{Fairnessmoreworkexample1, Fairnessmoreworkexample2, Fairnessmoreworkexample3, Fairnessmoreworkexample4, Fairnessmoreworkexample5}, and investigating trade-offs between fairness and other optimization objectives \cite{thepriceoffairness, efficiencyfairnesstradeoffs}.

While most fairness work concentrates on classification problems where the instance is known offline, we explore the problem of making fair decisions when the input is revealed in an online manner. Although fairness in online algorithms is an interesting line of research per se, fairness considerations have become increasingly important due to the recent interest in incorporating (possibly biased) machine learning predictions into the design of classical online algorithms. This framework, usually referred to as \textit{learning-augmented algorithms} or \textit{algorithms with predictions}, was first formalized in \cite{DBLP:conf/icml/LykourisV18}. In contrast to classical online algorithms problems where it is assumed that no information is known about the future,  learning-augmented online algorithms are given as input, possibly erroneous, predictions about the future. The main challenge is to simultaneously achieve an improved performance when the predictions are accurate and a robust performance when the predictions are arbitrarily inaccurate. 
A long list of online problems have been considered in this setting and we point to \cite{surveywebsiteforlaalgorithms} for an up-to-date list of papers.
We enrich this active area of research by investigating how potentially biased predictions affect the fairness of  decisions made by learning-augmented algorithms, and ask the following question:
\vspace{-1mm}
\begin{mdframed}[hidealllines=true, backgroundcolor=gray!15]
    \begin{center}
        Can we design fair algorithms that take advantage of unfair predictions?
    \end{center}
\end{mdframed}

\vspace{-2mm}

In this paper, we study this question on a parsimonious formulation of the secretary problem with predictions, motivated by fairness in hiring candidates.

\textbf{The problem.} In the classical secretary problem, there are $n$ candidates who each have a value and arrive in a random order. Upon arrival of a candidate, the algorithm observes the value of that candidate and must irrevocably decide whether to accept or reject that candidate. It can only accept one candidate and the goal is to maximize the probability of accepting the candidate with maximum value. In the classical formulation, only the \textit{ordinal} ranks of candidates matter, and the algorithm of \citet{dynkin} accepts the best candidate with a constant probability, that equals the best-possible $1/e$.

In the learning-augmented formulation of the problem proposed by \citet{SecretarieswithAdviceYoshida}, the algorithm is initially given a predicted value about each candidate and the authors focus on comparing the expected \textit{cardinal} value accepted by the algorithm to the maximum cardinal value. The authors derive an algorithm that obtains expected value at least $\max \{ \Omega(1) , 1 - O(\predErr)\}$ times the maximum value, where $\predErr \geq 0$ is the prediction error. The strength of this guarantee is that it approaches $1$ as the prediction error decreases and it is a positive constant even when the error is arbitrarily large.

However, because the algorithm is now using predictions that could be biased, the best candidate may no longer have any probability of being accepted.  We view this as a form of unfairness, and aim to derive algorithms that are fair to the best candidate by guaranteeing them a constant probability of being accepted (we contrast with other notions of fairness in stopping problems in \Cref{sec:related}). Of course, a simple way to be fair by this metric is to ignore the predictions altogether and run the classical algorithm of Dynkin. However, this approach would ignore potentially valuable information and lose the improved guarantee of \cite{SecretarieswithAdviceYoshida} that approaches 1 when the prediction error is low.

\textbf{Outline of results.} We first formally show that the algorithm of \cite{SecretarieswithAdviceYoshida} may in fact accept the best candidate with 0 probability. Our main result is then a new algorithm for secretary with predictions that: obtains expected value at least $\max \{ \Omega(1) , 1 - O(\predErr)\}$ times the maximum value, like \cite{SecretarieswithAdviceYoshida}; and ensures that, under any predictions, the best candidate is hired with $\Omega(1)$ probability. This result takes advantage of potentially biased predictions to achieve a guarantee on expected value that approaches $1$ when the prediction error is small, while also providing a fairness guarantee for the true best candidate irrespective of the predictions. We note that \citet{SecretarieswithAdviceAntoniadis} also derive an algorithm for secretary with predictions, where the prediction is of the maximum value (a less informative form of prediction). 
This algorithm accepts the best candidate with constant probability but it does not provide a guarantee on the expected value accepted that approaches $1$ as the prediction error approaches $0$. Similarly, Dynkin's algorithm for the classical secretary problem accepts the best candidate with constant probability but does not make use of predictions at all.

Our algorithm is fundamentally different from existing algorithms for secretary with predictions, as our ``pegging'' idea, i.e., the idea not to accept a possibly suboptimal candidate if there is a future candidate with high enough predicted value, is important to achieve our fairness desideratum.
We also note that the definitions of the prediction error $\predErr$ differ in \citep{SecretarieswithAdviceYoshida} and \citep{SecretarieswithAdviceAntoniadis};  the former error definition uses the maximum ratio over all candidates between their predicted and true value while the latter uses the absolute difference.
Our techniques present an arguably simpler analysis and extend to a general family of prediction error measures that includes both of these error definitions.

We then extend our approach to the multiple choice or $k$-secretary problem where the goal is to accept at most $k$ candidates and maximize the total of their values, which is the most technical part of the paper. We design an algorithm that obtains expected total value at least $\max \{ \Omega(1) , 1 -  O(\predErr)\}$ times the optimum (which is the sum of the $k$ highest values), while simultaneously guaranteeing the $k$ highest-valued candidates a constant probability of being accepted.  We also have a refined guarantee that provides a higher acceptance probability for the $(1-\delta)k$ highest-valued candidates, for any $\delta\in(0,1)$.

Finally, we simulate our algorithms in the exact experimental setup of \citet{SecretarieswithAdviceYoshida}.  We find that they perform well both in terms of expected value accepted and fairness, whereas benchmark algorithms compromise on one of these desiderata.

\subsection{Related work} 
\label{sec:related}

\textbf{The secretary problem.} After \citet{gardner} introduced the secretary problem,~\citet{dynkin} developed a simple and optimal stopping rule algorithm that, with probability at least $1/e$, accepts the candidate with maximum value. Due to its general and simple formulation, the problem has received a lot of attention (see, e.g., \cite{lindley, gilbertmosteller} and references therein)  and it was later extended to more general versions such as $k$-secretary~\citep{kleinberg2005multiple}, matroid-secretary~\citep{matroidbabioff} and knapsack-secretary~\citep{babaioff2007multiple}.

\textbf{Secretaries with predictions.} The two works which are closest to our paper are those of~\citet{SecretarieswithAdviceAntoniadis} and \citet{SecretarieswithAdviceYoshida}. Both works design algorithms that use predictions regarding the values of the candidates to improve the performance guarantee of Dynkin's algorithm when the predictions are accurate while also maintaining robustness guarantees when the predictions are arbitrarily wrong. \citet{SecretarieswithAdviceAntoniadis} uses as prediction only the maximum value and defines the prediction error as the additive difference between the predicted and true maximum value while \citet{SecretarieswithAdviceYoshida} receives a prediction for each candidate and defines the error as the maximum multiplicative difference between true and predicted value among all candidates. Very recently, \citet{choo2024short} showed that any secretary algorithm that is $1$-consistent cannot achieve robustness better than $1/3 + o(1)$, even with predictions for each candidate. This result implies that, if we wish to maintain the $1 -  O(\predErr)$ competitive ratio guarantee from \citep{SecretarieswithAdviceYoshida}, then the probability of accepting the best candidate cannot be better than $1/3 + o(1)$.

\textbf{Secretaries with distributional advice.} Another active line of work is to explore how \textit{distributional} advice can be used to surpass the $1/e$ barrier of the classical secretary problem. Examples of this line of work include the \textit{prophet secretary problems} where each candidate draws its valuation from a known distribution~\cite{esfandiari2017prophet, correa2021bposted, correa2021cprophet, azar2018prophet} and the \textit{sample secretary problem} where the algorithm designer has only sample access to this distribution~\cite{kaplan2020competitive, correa2021asecretary}. We note that in the former models, predictions are either samples from distributions or distributions themselves which are assumed to be perfectly correct, while in the learning-augmented setting, we receive point predictions that could be completely incorrect. \citet{SecretarieswithAdviceDutting} investigate a general model for advice where both values and advice are revealed upon a candidate's arrival and are drawn from a joint distribution $\cF$. 
For example, their advice can be a noisy binary prediction about whether the current candidate is the best overall. Their main result uses linear programming to design optimal algorithms for a broad family of advice that satisfies two conditions. However, these two conditions are not satisfied by the predictions we consider. Additionally, we do not assume any prior knowledge of the prediction quality, whereas their noisy binary prediction setting assumes that the error probability of the binary advice is known.

\textbf{Fairness in stopping algorithms.}
We say that a learning-augmented algorithm for the secretary problem is $F$-fair if it accepts the candidate with the maximum true value with probability at least $F$. In that definition, we do not quantify unfairness as a prediction property but as an algorithmic one, since the algorithm has to accept the best candidate with probability at least $F$ no matter how biased predictions are our fairness notion is a challenging one.
That notion can be characterized as an individual fairness notion similar to the \textit{identity-independent fairness} (IIF) and \textit{time-independent fairness} (TIF) introduced in~\cite{individualfairnessarseniskleinberg}.
In the context of the secretary problem, IIF and TIF try to mitigate discrimination due to a person's identity and arrival time respectively. While these are very appealing fairness notions, the fair algorithms designed in~\cite{individualfairnessarseniskleinberg} fall in the classical online algorithms setting as they do not make any assumptions about the future. Consequently, their performance is upper bound by the performance of the best algorithm in the classical worst-case analysis setting. It is also interesting to note the similarities with the \textit{poset secretary problem} in~\cite{posetsecretary}. In the latter work the set of candidates is split into several groups and candidates belonging to different groups cannot be compared due to different biases in the evaluation. In some sense, we try to do the same; different groups of candidates may have predictions that are affected by different biases making the comparison difficult before the true value of each candidate is revealed. Again, in~\cite{posetsecretary} no information about the values of future candidates is available and the performance of their algorithms is upper bounded by the best possible performance in the worst-case analysis setting.
\section{Preliminaries}\label{sec: model and preliminaries}

\textbf{Secretary problem with predictions.}
Candidates $i=1,\ldots,n$ have true values $u_i$ and predicted values $\hu_i$.
The number of candidates $n$ and their predicted values are known in advance.
The candidates arrive in a uniformly random order. Every time a new candidate arrives their true value is revealed and the algorithm must immediately decide whether to accept the current candidate or reject them irrevocably and wait for the next arrival. We let $\ssi = \argmax_{i} u_i$ and $\hi = \argmax_{i} \hu_i$ denote the indices of the candidates with the maximum true and predicted value respectively. An instance $\cI$ consists of the $2n$ values $u_1,\ldots,u_n,\hu_1,\ldots,\hu_n$ which,
for convenience, are assumed to be positive\footnote{Our results for additive error allow negative values, but our extension to multiplicative error in~\Cref{subsec: appendix the pegging algorithm} requires positive values.} and mutually distinct\footnote{This is without loss as adding an arbitrarily small perturbation to each true and predicted value does not change the performance of our algorithms. This allows for a unique $\argmax$ in the definitions of $\ssi$ and $\hi$.}.
We let $\predErr(\cI)$ denote its \textit{prediction error}.
For simplicity, we focus on the additive prediction error
$\predErr(\cI) = \max_{i} |\hu_i - u_i|$, but we consider an abstract generalization that includes the multiplicative prediction error of \cite{SecretarieswithAdviceYoshida} in Appendix~\ref{subsec: appendix the pegging algorithm}.

\textbf{Objectives.} We let $\cA$ be a random variable denoting the candidate accepted by a given algorithm on a fixed instance, which depends on both the arrival order and any internal randomness in the algorithm. We consider the following desiderata for a given algorithm:
\begin{align*}
\Expect{u_{\cA}}  &\geq u_{\ssi} -C\cdot\predErr(\cI),
\;\forall\cI &\text{  \textit{(smoothness)} } 
\\ \Prob{\cA = \ssi} &\ge F, \;\forall\cI.  &\text{  \textit{(fairness)} }  
\end{align*}
Since the prediction error $\predErr(\cI)$ is an additive prediction error, we define smoothness to provide an additive approximation guarantee that depends on $\predErr(\cI)$. When considering the multiplicative prediction error of~\cite{SecretarieswithAdviceYoshida}, smoothness is defined to provide an approximation guarantee that is multiplicative instead of additive (see Theorem~\ref{thm: recovering the results of Fujii and Yoshida}).

We aim to derive algorithms that can satisfy smoothness and fairness with constants $C,F>0$ that do not depend on the instance $\cI$ or the number of candidates $n$. Existing algorithms for secretary with predictions do not simultaneously satisfy these desiderata, as shown by our examples in Appendix~\ref{subsec: unfair outcomes in previous work}.

\textbf{Comparison to other objectives.} 
Existing algorithms for secretary with predictions do satisfy a weaker notion called $R$-\textit{robustness}, where $\Expect{u_{\cA}}  \geq R \cdot u_{\ssi}$ for some constant $R>0$.
Our desideratum of fairness implies $F$-robustness and aligns with the classical secretary formulation where one is only rewarded for accepting the best candidate.
Another notion of interest in existing literature is \textit{consistency}, which is how $\Expect{u_{\cA}}$ compares to $u_{\ssi}$ when $\predErr(\cI)=0$.
Our smoothness desideratum implies $1$-consistency, the best possible consistency result, and guarantees a smooth degradation as $\eps(\cI)$ increases beyond $0$.
\section{Algorithm and Analysis}\label{sec: algorithm}

We first present and analyze \addpegging in~\Cref{alg: the pegging algorithm} which achieves the desiderata from~\Cref{sec: model and preliminaries}.
Then, we mention how using a more abstract prediction error and an almost identical analysis, permits us to generalize \addpegging to \pegging which achieves comparable guarantees for a more general class of error functions that includes the multiplicative error.

Our algorithms assume that each candidate $i$ arrives at an independently random arrival time $t_i$ drawn uniformly from $[0,1]$. The latter continuous-time arrival model is equivalent to candidates arriving in a uniformly random order and simplifies the algorithm description and analysis. We also write $\epsilon_i$ as shorthand for $\abs{u_i - \hu_i}$, $\predErr$ as shorthand for $\predErr(\cI)$ (so that $\predErr=\max_i\epsilon_i$) and $i\prec j$ if $t_i < t_j$.

\textbf{Description of \addpegging.}
\addpegging ensures smoothness by always accepting a candidate whose value is close to $u_\hi$ which, as we argue, is at least $u_\ssi - 2 \predErr$. To see this, note that $u_\hi \geq \hu_\hi - \epsilon_\hi \geq \hu_\ssi - \epsilon_\hi \geq 
u_\ssi - \epsilon_\ssi - \epsilon_\hi \geq u_\ssi - 2 \predErr$, where we used that $\hu_\hi \geq \hu_\ssi$ (by definition of $\hi$) and $\predErr \geq \max\{\epsilon_\ssi, \epsilon_\hi\}$ (by definition of $\predErr$). Consequently, for smoothness, our algorithm defines the literal $\cC = ( i = \hi)$ at each new arrival, which is true if and only if $i$ has the highest predicted value. Accepting while $\cC$ holds would maintain smoothness.

For the fairness desideratum, we note that Dynkin's algorithm \citep{dynkin} for the classical secretary problem relies on the observation that if a constant fraction of the candidates have arrived and the candidate who just arrived has the maximum true value so far, then this candidate has a constant probability of being the best overall. The same high-level intuition is used in our algorithm. Every time a new candidate $i$ arrives, we check if $i$ is the maximum so far and if $t_i > 1/2$; namely, we compute the literal $\cF$. Accepting when $\cF$ is true, which is what Dynkin's algorithm does, would ensure fairness.

However, there are two crucial situations where \addpegging differs from Dynkin's algorithm.
The first such situation is when the candidate $\hi$ with maximum predicted value arrives and we have that $\hi$ is not the maximum so far or $t_{\hi} \leq  1/2$, i.e., $\cC \land \overline{\cF}$ is true.
In this case, we cannot always reject $\hi$, as Dynkin's algorithm would, because that would not guarantee smoothness.
Instead, we reject $\hi$ only if there is a future candidate whose prediction is sufficiently high compared to $u_{\hi}$. We call $I^\mathsf{pegged}$ the set of those candidates.
The main idea behind the pegged set $I^\mathsf{pegged}$ is that it contains the last candidate to arrive who can guarantee the smoothness property, which is why we accept that candidate when they arrive.
The second situation where our algorithm departs from Dynkin's algorithm is when a candidate $i$ arrives with $i \neq \hi,i\neq\iPegged$ and we have that $\cF$ is true, in which case~\Cref{alg: the pegging algorithm} executes the if statement under the case $\overline{\cC}  \land \cF$. In this situation, we cannot always accept $i$ as Dynkin's algorithm would, because that would again violate smoothness. Instead, we accept $i$ only if $u_i$ can be lower bounded by $\hat{u}_\hi - \et{i}$, noting that if conversely $u_i  + \et{i}$ is smaller than $\hat{u}_\hi$, then accepting $i$ might be detrimental to our quest of ensuring smoothness.

\newcounter{casenum}[algorithm]
% New definitions
\algnewcommand\algorithmiccase{\textbf{case}}
% New "environments"
\algdef{SE}[CASE]{Case}{EndCase}[1]{\refstepcounter{casenum} \algorithmiccase\ \arabic{casenum}: #1}{\algorithmicend\ \algorithmiccase}%
\algtext*{EndCase}%

\newcounter{subcasenum}[algorithm]
% New definitions
\algnewcommand\algorithmicsubcase{\textbf{subcase}}
% New "environments"
\algdef{SE}[SUBCASE]{SubCase}{EndSubCase}[2]{\algorithmicsubcase\ \arabic{casenum}#1: #2}{\algorithmicend\ \algorithmicsubcase}%
\algtext*{EndSubCase}%

\begin{algorithm}[H]
\caption{\addpegging}\label{alg: the pegging algorithm}
\begin{algorithmic}
\State //* The algorithm stops when it accepts a candidate by executing $\cA\gets i$. *//
\State \textbf{Initialization:} $I^\mathsf{pegged} \gets \emptyset$ 
\While{agent $i$ arrives at time $t_i$}
\If{$i\in I^\mathsf{pegged}$}
    \If{$|I^\mathsf{pegged}|=1$}
        \State $\cA\gets i$
    \Else
        \State $I^\mathsf{pegged}\gets I^\mathsf{pegged}\setminus\{i\}$
    \EndIf
\EndIf
    \State $\cF \gets \left(u_i > \max_{j\prec i}u_j\right) \land \left( t_i >1/2 \right), \cC \gets \left( i = \hi \right), \et{i} \gets \max_{j: t_j \leq t_i}  \abs{\hu_j - u_j}$
     \If{$\cC  \land  \cF$}
        \State $\cA\gets i$
    \ElsIf{$\cC \land \overline{\cF}$}
        \State $I^\mathsf{pegged}\gets\{j\succ \hat{\imath}: u_\hi<\hu_j+ \et{\hi} \}$ (note that $\hi=i$)
        \If{$I^\mathsf{pegged} = \emptyset$}
            \State $\cA\gets i$
        \EndIf
    \ElsIf{$\overline{\cC}  \land \cF$}
        \If{$u_i > \hu_\hi -   \et{i}$}
            \State $\cA\gets i$
        \EndIf
    \EndIf
\EndWhile
\end{algorithmic}
\end{algorithm}

\textbf{Analysis of the \addpegging algorithm.}
\begin{lemma}
\label{lem:consistent}
\addpegging satisfies $u_{\mathcal A} \  \geq u_\ssi  -  4 \predErr (\cI) , \; \forall \cI$ with probability 1.
\end{lemma}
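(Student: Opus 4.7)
The plan is to do a case analysis on which branch of~\Cref{alg: the pegging algorithm} triggers the assignment $\cA \gets i$, and show in each case that the accepted $i$ satisfies $u_i \geq u_\ssi - 4\predErr$. A preliminary observation is that the algorithm always terminates with an acceptance: whenever $\hi$ arrives, the branch $\cC \land \cF$ or $\cC \land \overline{\cF}$ fires; the latter either accepts immediately (when $I^\mathsf{pegged}=\emptyset$) or populates $I^\mathsf{pegged}$ with candidates arriving strictly after $\hi$, the last of which is accepted via the $|I^\mathsf{pegged}|=1$ branch (unless an earlier $\overline{\cC}\land\cF$ branch already accepted). So the event ``$\cA$ is undefined'' has probability $0$.

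The key reusable inequality, already noted in the description of the algorithm, is $u_\hi \geq u_\ssi - 2\predErr$, coming from $u_\hi \geq \hu_\hi - \epsilon_\hi \geq \hu_\ssi - \epsilon_\hi \geq u_\ssi - \epsilon_\ssi - \epsilon_\hi$. From this I would handle the easy branches first. If acceptance happens via $\cC \land \cF$ or via $\cC \land \overline{\cF}$ with $I^\mathsf{pegged}=\emptyset$, then $\cA = \hi$ and we already have $u_{\cA} \geq u_\ssi - 2\predErr$. If acceptance happens via $\overline{\cC} \land \cF$ with $u_i > \hu_\hi - \et{i}$, then using $\et{i}\le\predErr$ and $\hu_\hi \geq \hu_\ssi \geq u_\ssi - \predErr$ gives $u_{\cA} > u_\ssi - 2\predErr$, so the $4\predErr$ slack is easily met.

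The main obstacle, and where the factor of $4$ is actually tight, is the branch in which $\cA$ is accepted from the pegged set. Here $\cA$ was added to $I^\mathsf{pegged}$ when $\hi$ arrived, which requires $\hu_{\cA} > u_\hi - \et{\hi}$. I would then chain four single-error slacks:
\begin{align*}
u_{\cA} \;\geq\; \hu_{\cA} - \epsilon_{\cA}
\;>\; u_\hi - \et{\hi} - \epsilon_{\cA}
\;\geq\; \hu_\hi - \epsilon_\hi - \et{\hi} - \epsilon_{\cA}
\;\geq\; \hu_\ssi - \epsilon_\hi - \et{\hi} - \epsilon_{\cA}
\;\geq\; u_\ssi - \epsilon_\ssi - \epsilon_\hi - \et{\hi} - \epsilon_{\cA}.
\end{align*}
Since each of $\epsilon_\ssi$, $\epsilon_\hi$, $\et{\hi}$, $\epsilon_{\cA}$ is at most $\predErr$ by definition, the right-hand side is at least $u_\ssi - 4\predErr$, completing this branch. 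Assembling the four branches gives the bound deterministically in every realization of the arrival order, establishing the ``with probability $1$'' claim. No intricate probabilistic argument is required: the lemma is purely about per-realization value bookkeeping, and the delicate step is just tracking which four $\epsilon$-losses accumulate along the ``pegged'' chain.
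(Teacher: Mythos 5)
Your proposal is correct and follows essentially the same argument as the paper's proof: the same termination observation, the same $2\predErr$ bounds for the $\cC$ and $\overline{\cC}\land\cF$ acceptance branches, and the same four-error chain (via the pegging condition $u_\hi < \hu_{\cA} + \et{\hi}$) for the candidate accepted from $I^\mathsf{pegged}$. No substantive differences.
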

\begin{proof}
Let $\iPegged$ denote the last arriving candidate in $I^\mathsf{pegged}$.

We first argue that \pegging always accepts a candidate irrespective of the random arrival times of the candidates. We focus on any instance where \addpegging does not accept a candidate until time $t_\hi$. At time $t_\hi$ either $\cC \land \cF$ or $\cC \land \overline{\cF}$ are true. Since in the former case candidate $\hi$ is accepted, we focus on the latter case and in particular whenever the set $I^\mathsf{pegged}$ which is computed is non-empty (otherwise, candidate $\hi$ is accepted). In that case, it is guaranteed that by time $ t_{\iPegged}$ \addpegging will accept a candidate. 

We now argue that in all cases \addpegging maintains smoothness. Using $\predErr$, $\epsilon_i$ definitions and the fact that $\hi$ is the candidate with the maximum predicted value we have: 
$\hu_\hi  \geq \hu_\ssi  \geq u_\ssi  -  \epsilon_\ssi \geq u_\ssi  -  \predErr $.
If candidate $\hi$ is accepted then using the latter lower bound we get $u_\hi \geq \hu_\hi  -  \epsilon_\hi \geq u_\ssi  -  \predErr  -  \epsilon_\hi  \geq  u_\ssi  -  2 \predErr$. 
If we accept $i \neq \hi$ and the if statement of $\overline{\cC}  \land {\cF}$ is executed at time $t_i$ then we have $u_i > \hu_\hi  -  \et{i} \geq u_\ssi  -  \predErr  -  \et{i} \geq  u_\ssi  -  2 \predErr $.
Finally, we need to lower bound the value $u_{\iPegged}$ in case our algorithm terminates accepting $\iPegged$.
Note that from the way the pegged set $I^\mathsf{pegged}$ is updated when $\cC \land \overline{\cF}$ is true we always have $u_\hi < \hu_\iPegged + \et{\hi}$. Since $u_\iPegged \geq \hu_\iPegged  -  \epsilon_\iPegged $ we can conclude that $ u_\iPegged > u_\hi  -  \et{\hi} -  \epsilon_\iPegged \geq u_\ssi  -  4 \predErr  $. 
\end{proof}

% Next, we analyze its fairness.
\begin{lemma}
\label{lem:fair}
\addpegging satisfies $\Prob{\cA = \ssi} \ge 1/16, \; \forall \cI$.
\end{lemma}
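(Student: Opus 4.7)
The plan is to lower bound $\Prob{\cA=\ssi}$ by exhibiting a ``good event'' of probability at least $1/16$ on which the algorithm provably accepts $\ssi$. I would split on whether $\ssi=\hi$, and in the case $\ssi\neq\hi$ further split based on the comparison $\epsilon_\ssi$ versus $\epsilon_\hi$, which governs whether $\ssi\in I^\mathsf{pegged}$. Let $i_2=\argmax_{j\neq\ssi}u_j$ denote the runner-up by true value.

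For $\ssi=\hi$, I would use the Dynkin-style event $E_1=\{t_{i_2}<1/2<t_\ssi\}$ of probability $1/4$. When $\ssi=\hi$ arrives it is trivially max so far, so the $\cC\land\cF$ branch fires and accepts $\ssi$; no earlier acceptance is possible because the $\overline{\cC}\land\cF$ branch would require some $i\neq\ssi$ with $t_i>1/2$ to be max so far, contradicting the arrival of $i_2$ before time $1/2$, and pegging cannot fire before $t_\hi=t_\ssi$.

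For $\ssi\neq\hi$ with $\epsilon_\ssi\le\epsilon_\hi$, I would use $E_\alpha=\{t_\hi<1/2,\ t_{i_2}<1/2,\ t_\ssi>1/2\}$ of probability at least $1/8$: here $\hu_\ssi+\epsilon_\hi\ge u_\ssi-\epsilon_\ssi+\epsilon_\hi\ge u_\ssi>u_\hi$ forces $\ssi\in I^\mathsf{pegged}$ whenever $t_\hi<t_\ssi$, so the pegged set is non-empty (blocking the empty-pegged branch of $\cC\land\overline{\cF}$), the $\cC\land\cF$ branch is avoided by $t_\hi<1/2$, impostors are ruled out exactly as in the first case, and $\ssi$ is accepted either as the last-arriving pegged element or via $\overline{\cC}\land\cF$ after being removed from a larger pegged set, using $\et{\ssi}\ge\epsilon_\hi$ to give $u_\ssi>u_\hi\ge\hu_\hi-\epsilon_\hi\ge\hu_\hi-\et{\ssi}$. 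For $\ssi\neq\hi$ with $\epsilon_\ssi>\epsilon_\hi$, I would instead use $E_\beta=\{1/2<t_\ssi<t_\hi,\ t_{i_2}<1/2\}$ of probability $1/16$ when $\hi,\ssi,i_2$ are three distinct candidates: $\ssi$ arrives before $\hi$ so pegging and the $\cC$-branches cannot fire first, impostors are still excluded by the $i_2$-argument, and the $\overline{\cC}\land\cF$ branch accepts $\ssi$ via the slack $\et{\ssi}\ge\epsilon_\ssi>\epsilon_\hi$, giving $\hu_\hi-\et{\ssi}<\hu_\hi-\epsilon_\hi\le u_\hi<u_\ssi$.

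The main obstacle is the sub-sub-case $\hi=i_2$ inside the branch $\epsilon_\ssi>\epsilon_\hi$, where $E_\beta$ degenerates because $t_{i_2}=t_\hi$ cannot simultaneously lie below $1/2$ and above $t_\ssi>1/2$. I would handle this by using a variant event that instead constrains $\ssi$ to be the first arrival in the interval $(1/2,t_\hi)$, bounding its probability below by a constant via independence of the remaining arrival times and exploiting the slack $\epsilon_\ssi>\epsilon_\hi$ to guarantee the $\overline{\cC}\land\cF$ inequality triggers at $t_\ssi$. Combining the events across all cases yields $\Prob{\cA=\ssi}\ge 1/16$, with the bound being essentially tight in the hardest sub-sub-case.
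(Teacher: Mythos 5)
Your overall strategy matches the paper's: split on $\ssi=\hi$ versus $\ssi\neq\hi$, use a high-value ``blocker'' forced to arrive before time $1/2$ to kill impostors, and in the $\ssi\neq\hi$ case split on (an equivalent of) which of $u_\hi<\hu_\ssi+\epsilon_\hi$ or $u_\ssi>\hu_\hi-\epsilon_\ssi$ holds. Your first two cases (including the observation that other pegged candidates arriving before $t_\ssi$ are merely removed because $\ssi$ is still in $I^\mathsf{pegged}$) are sound. The genuine gap is exactly the sub-sub-case you flag, $\hi=i_2$ with $\epsilon_\ssi>\epsilon_\hi$, and your proposed repair does not work: the event ``$\ssi$ is the first arrival in the interval $(1/2,t_\hi)$'' requires every one of the other $n-2$ candidates to avoid the random interval $(1/2,t_\ssi)$, and its probability is of order $1/n$ (roughly $\int_{1/2}^{1}(3/2-s)^{n-2}\,ds$), not a constant; independence of the arrival times does not rescue this. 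Since the lemma demands a bound $F=1/16$ independent of $n$, this case is left unproved.

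The paper sidesteps the degeneracy by choosing the blocker as $\ti=\argmax_{i\neq\ssi,\hi}u_i$, i.e., the best candidate excluding \emph{both} $\ssi$ and $\hi$, rather than the runner-up $i_2=\argmax_{j\neq\ssi}u_j$. Then $\ti,\ssi,\hi$ are always three distinct candidates (for $n\ge 3$), and the event $E_2=\{t_\ti<1/2<t_\ssi<t_\hi\}$ has probability exactly $1/16$. The impostor argument still goes through with this weaker blocker: any candidate other than $\ssi,\hi,\ti$ arriving in $(1/2,t_\ssi)$ has true value below $u_\ti$ and hence is not the maximum so far, $\ti$ itself arrives before $1/2$, and $\hi$ arrives after $t_\ssi$ under $E_2$, so neither a $\cC$-branch nor a pegging step can fire before $t_\ssi$; at $t_\ssi$ the $\overline{\cC}\land\cF$ branch accepts using $\et{\ssi}\ge\epsilon_\ssi$. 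If you replace $i_2$ by $\ti$ in your third case (and handle $n=2$ separately, where the bound improves to $1/4$), your argument becomes complete and coincides with the paper's proof.
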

\begin{proof}
In the following, we assume that the number of candidates is larger or equal to $3$. The proof for the case where $n = 2$ is almost identical while the fairness guarantee in that case is $1/4$. We denote by $\ti$ the index of the candidate with the highest true value except $\ssi$ and $\hi$, i.e., $\ti = \argmax_{i\neq \ssi,\hi}u_i$. Note that depending on the value of $u_\hi$, $\ti$ might denote the index of the candidate with the second or third highest true value. To prove fairness we distinguish between two cases: either $\hi = \ssi$ or $\hi \neq \ssi$. For each of those cases, we define an event and argue that: (1) the event happens with constant probability, and (2) if that event happens then \addpegging accepts $\ssi$.

If $ \ssi = \hi $ we define event $ E = \{ t_\ti < 1/2 < t_\ssi \}$ for which $\Prob{E} = 1/4$. $E$ implies that our algorithm does not accept any candidate until time $t_\ssi$. Indeed, note that at any point in time before $t_\ssi$, both literals $\cF$ and $\cC$ are simultaneously false. On the contrary, at time $t_\ssi$, both $\cC$ and $\cF$ are true and our algorithm accepts $\ssi$.

On the other hand, if $ \ssi \neq \hi $ we distinguish between two sub-cases. First, we show that either $u_\hi < \hat{u}_\ssi +  \epsilon_\hi$ or $u_\ssi > \hat{u}_\hi  -    \epsilon_\ssi$ is true. By contradiction, assume that both inequalities do not hold, then 
\begin{align*}
\vspace{-1mm}
u_\hi \geq \hu_\ssi + \epsilon_\hi \xRightarrow{u_\ssi > u_\hi} u_\ssi > \hu_\ssi + \epsilon_\hi  \xRightarrow{} u_\ssi - \hu_\ssi>  \epsilon_\hi \xRightarrow{\epsilon_\ssi \geq u_\ssi - \hu_\ssi} \epsilon_\ssi >  \epsilon_\hi \\
u_\ssi  \leq \hu_\hi -  \epsilon_\ssi \xRightarrow{u_\ssi > u_\hi} u_\hi  < \hu_\hi -  \epsilon_\ssi \xRightarrow{} \epsilon_\ssi  < \hu_\hi - u_\hi \xRightarrow{\epsilon_\hi \geq u_\hi - \hu_\hi} \epsilon_\ssi < \epsilon_\hi
\vspace{-1mm}
\end{align*}
which is a contradiction. We now define two events $E_1$ and $E_2$ which imply that $\ssi$ is always accepted whenever $\{ u_\hi < \hat{u}_\ssi +  \epsilon_\hi \} $ and $ \{u_\ssi > \hat{u}_\hi  -   \epsilon_\ssi\}$ are true respectively.

If  $u_\hi < \hat{u}_\ssi +  \epsilon_\hi $, then we define event $E_1 = \{t_\ti < 1/2 \} \land \{t_\hi < 1/2 \} \land \{ 1/2 < t_\ssi \} $ which is composed by $3$ independent events and it happens with probability $\Prob{E_1} = 1/2^3 = 1/8$.
$E_1$ implies that $t_\hi < t_\ssi \Rightarrow \et{\ssi} \geq \epsilon_\hi$, thus we can deduce that  $ u_\ssi > u_\hi \geq \hu_\hi - \epsilon_\hi \geq \hu_\hi - \et{\ssi}$.
Consequently, if until time $t_\ssi$ all candidates are rejected, $E_1$ implies that $\overline{\cC} \land \cF \land  \left\{ u_\ssi > \hu_\hi -   \epsilon_{\ssi} \right\}$ is true at time $t_\ssi$ and candidate $\ssi$ is hired. To argue that no candidate is accepted before time $t_\ssi$, note that $\cF$ is false at all times before $t_\ssi$ and at time $t_\hi$ (when literal $\cC$ is true) the set $ \{j \succ \hi: u_\hi<\hu_j+ \et{\hi} \} \supseteq \{j \succ \hi: u_\hi<\hu_j+ \epsilon_\hi \}$ contains $\ssi$.

If $u_\ssi > \hat{u}_\hi - \epsilon_\ssi$, then we define $E_2 = \{ t_\ti < 1/2 < t_\ssi < t_\hi \}$ which happens with probability 
\begin{align*}
\Prob{E_2} &= \Prob{t_\ti < 1/2} \cdot \Prob{1/2 < t_\ssi < t_\hi} \\
          &= \Prob{t_\ti < 1/2} \cdot \Prob{1/2 < \min \{t_\ssi, t_\hi\} \land \min \{t_\ssi, t_\hi\} = t_\ssi } \\
          &= \Prob{t_\ti < 1/2} \cdot \Prob{1/2 < \min \{t_\ssi, t_\hi\}} \cdot \Prob{\min \{t_\ssi, t_\hi\} = t_\ssi} \\
          &= (1/2) \cdot (1/4) \cdot (1/2) = 1/16
\end{align*}
Note that until time $t_\ssi$ no candidate is accepted since $\cC$ and $\cF$ are both false at all times. Indeed, between times $0$ and $1/2$ only $\hi$ could have been accepted but its arrival time is after $t_\ssi$, and between times $1/2$ and $t_\ssi$ no candidate has a true value larger than $u_\ti$. Finally, note that at time $t_\ssi$ we have $\et{\ssi} \geq \epsilon_\ssi$ and consequently $\overline{\cC}  \land \cF \land \{ u_\ssi > \hat{u}_\hi - \et{\ssi} \}$ is true and $\ssi$ gets accepted.
\end{proof}

\begin{theorem}\label{thm: main theorem for single choice secretary}
    \addpegging satisfies smoothness and fairness with $C=4$ and $F=1/16$.
\end{theorem}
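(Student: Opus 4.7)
The plan is to observe that \Cref{thm: main theorem for single choice secretary} is an immediate corollary of the two lemmas already established, so the proof should consist essentially of assembling them. There is no real obstacle here: the hard work was done in \Cref{lem:consistent} and \Cref{lem:fair}, and what remains is to verify that the pointwise guarantees translate to the quantitative objectives of smoothness and fairness defined in \Cref{sec: model and preliminaries}.

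First I would invoke \Cref{lem:consistent}, which states that, on every instance $\cI$ and every realization of the arrival times, the candidate $\cA$ accepted by \addpegging satisfies $u_{\cA} \geq u_{\ssi} - 4\predErr(\cI)$ almost surely. Taking expectations over the random arrival order and any internal randomness of the algorithm preserves this inequality, yielding $\Expect{u_{\cA}} \geq u_{\ssi} - 4\predErr(\cI)$ for every $\cI$, which is precisely the smoothness desideratum with constant $C=4$.

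Next I would invoke \Cref{lem:fair}, which directly states that $\Prob{\cA = \ssi} \ge 1/16$ for every instance $\cI$. This is exactly the fairness desideratum with $F=1/16$.

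Combining the two displays gives both conclusions simultaneously, completing the proof. Since the fairness constant is the minimum over the two cases analyzed in \Cref{lem:fair} (the value $1/16$ arising from event $E_2$ being the binding one), no further optimization is needed, and the theorem follows.
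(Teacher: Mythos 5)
Your proposal is correct and matches the paper's own argument: Theorem~\ref{thm: main theorem for single choice secretary} is stated there as following directly from Lemma~\ref{lem:consistent} (whose pointwise bound yields smoothness with $C=4$ after taking expectations) and Lemma~\ref{lem:fair} (which gives fairness with $F=1/16$). Nothing further is needed.
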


\Cref{thm: main theorem for single choice secretary} follows directly from \Cref{lem:consistent,lem:fair}.  We note that \Cref{lem:consistent} actually implies a stronger notion of smoothness that holds with probability 1.

\textbf{The general \pegging algorithm.} In~\Cref{subsec: appendix the pegging algorithm} we generalize the \addpegging algorithm to the \pegging algorithm to provide fair and smooth algorithms for different prediction error definitions. %Due to space constraints, we defer both the pseudocode~\ref{alg: the general pegging algorithm} and the main~\Cref{thm: main generalized theorem for single choice secretary} regarding \pegging's guarantees to~\Cref{subsec: appendix the pegging algorithm}. 
\addpegging is an instantiation of \pegging when the prediction error is defined as the maximum absolute difference between true and predicted values among candidates. To further demonstrate the generality of \pegging, we also instantiate it over the same prediction error definition $\predErr(\cI)=\max_i \left| 1 - \hu_ i / u_i\right|$ as in~\cite{SecretarieswithAdviceYoshida} and recover similar smoothness bounds while also ensuring fairness. We name the latter instantiation \mulpegging and present its guarantees in~\Cref{thm: recovering the results of Fujii and Yoshida}.

\begin{restatable}{theorem}{recoveringtheresultsofFujiiandYoshida}\label{thm: recovering the results of Fujii and Yoshida}
Let  $\predErr(\cI)=\max_i \left| 1 - \hu_ i / u_i\right|$ and assume $u_i , \hat{u}_i > 0\ \forall i \in [n]$.  Then \mulpegging satisfies fairness with $F=1/16$ and selects a candidate $\cA$ such that $u_{\cA}  \geq u_{\ssi} \cdot \left( 1  -4\cdot\predErr(\cI) \right)$ with probability 1.
\end{restatable}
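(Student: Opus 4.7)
The plan is to mirror the proof of \Cref{thm: main theorem for single choice secretary} step by step, replacing every additive tolerance by its multiplicative analog. I would first describe how \mulpegging instantiates \pegging under $\predErr(\cI) = \max_i |1 - \hu_i/u_i|$: the pegging rule at time $t_\hi$ becomes $I^{\mathsf{pegged}} = \{j \succ \hi : u_\hi(1 - \et{\hi}) < \hu_j\}$ and the acceptance condition under $\overline{\cC} \wedge \cF$ becomes $u_i > \hu_\hi(1 - \et{i})$, where $\epsilon_i = |1 - \hu_i/u_i|$ and $\et{i} = \max_{j : t_j \leq t_i} \epsilon_j$. Since $\hu_i \in [u_i(1 - \epsilon_i),\, u_i(1 + \epsilon_i)]$, these conditions reduce to the additive ones after a natural change of variables, which is why \pegging encompasses both error models.

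For smoothness, the key auxiliary bound is $u_\hi \geq u_\ssi(1 - \epsilon_\ssi)(1 - \epsilon_\hi)$. It follows from $\hu_\hi \geq \hu_\ssi$ together with $u_\hi(1+\epsilon_\hi) \geq \hu_\hi$ and $\hu_\ssi \geq u_\ssi(1-\epsilon_\ssi)$, using $(1-\epsilon_\hi)(1+\epsilon_\hi) \leq 1$ to convert the $1/(1+\epsilon_\hi)$ factor into $(1-\epsilon_\hi)$. I then treat the three acceptance branches of \mulpegging exactly as in \Cref{lem:consistent}: (i) if $\cA = \hi$, the auxiliary bound gives $u_\cA \geq u_\ssi(1 - 2\predErr)$ directly; (ii) if $\cA = i$ is accepted under $\overline{\cC} \wedge \cF$, chain $u_i > \hu_\hi(1 - \et{i}) \geq u_\ssi(1-\epsilon_\ssi)(1 - \et{i})$; (iii) if $\cA = \iPegged$, chain $u_\iPegged \geq \hu_\iPegged(1 - \epsilon_\iPegged) > u_\hi(1 - \et{\hi})(1 - \epsilon_\iPegged)$ and compose with the auxiliary bound. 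Applying $\prod_\ell(1 - a_\ell) \geq 1 - \sum_\ell a_\ell$ with each $a_\ell \leq \predErr$ then gives $u_\cA \geq u_\ssi(1 - 4\predErr(\cI))$ in every branch.

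For fairness, the probabilistic structure of \Cref{lem:fair} carries over verbatim; only the algebraic dichotomy in the $\ssi \neq \hi$ case must be reverified in multiplicative form. I would prove that at least one of $u_\hi(1 - \epsilon_\hi) < \hu_\ssi$ or $u_\ssi > \hu_\hi(1 - \epsilon_\ssi)$ must hold. The argument is by contradiction: assuming both fail and using $u_\ssi > u_\hi$, the first inequality together with $\hu_\ssi \geq u_\ssi(1-\epsilon_\ssi)$ forces $\epsilon_\ssi > \epsilon_\hi$, while the second together with $\hu_\hi \leq u_\hi(1+\epsilon_\hi)$ forces $\epsilon_\ssi(1+\epsilon_\hi) < \epsilon_\hi$, a contradiction. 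Under the first branch of the dichotomy, event $E_1$ implies $\et{\hi} \geq \epsilon_\hi$, hence $u_\hi(1 - \et{\hi}) \leq u_\hi(1-\epsilon_\hi) < \hu_\ssi$ places $\ssi$ in the pegged set when $\hi$ arrives; under the second, event $E_2$ gives $\et{\ssi} \geq \epsilon_\ssi$ and $u_\ssi > \hu_\hi(1 - \et{\ssi})$, so the $\overline{\cC} \wedge \cF$ branch accepts $\ssi$ at time $t_\ssi$. The case $\ssi = \hi$ is handled by the same event $E$ of probability $1/4$ as in the additive proof.

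The main obstacle is the bookkeeping around $(1 \pm \epsilon)$ factors, particularly in the multiplicative dichotomy. Whereas additive errors combine trivially by subtraction, the multiplicative case requires repeatedly invoking $(1-a)(1-b) \geq 1 - a - b$ and the asymmetric identity $(1-\epsilon)(1+\epsilon) \leq 1$ to mediate between lower and upper perturbations of $1$, and the dichotomy must be phrased so that the same sufficient conditions ensuring $\ssi$ is pegged or directly accepted survive this translation. Once these manipulations are in place, the proof follows \Cref{lem:consistent,lem:fair} line by line, which is precisely why the same constants $C = 4$ and $F = 1/16$ carry over.
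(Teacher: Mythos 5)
Your proposal is correct and takes essentially the same route as the paper: the paper proves this theorem by instantiating its abstract \pegging analysis (the $\oplus/\ominus$ generalization of \Cref{lem:consistent,lem:fair}, via \Cref{assum: function d with respect to operations} and \Cref{thm: main generalized theorem for single choice secretary}) with multiplicative error and simplifying $(1-\predErr)^2/(1+\predErr)^2 \geq (1-\predErr)^4 \geq 1-4\predErr$, which is exactly the line-by-line multiplicative translation you carry out directly, including the same auxiliary bound, the three acceptance branches, the dichotomy, and the events $E$, $E_1$, $E_2$. The only cosmetic difference is that the paper's \mulpegging uses the acceptance test $u_i(1+\et{i}) > \hu_\hi$ rather than your slightly more permissive $u_i > \hu_\hi(1-\et{i})$, which affects neither the smoothness constant $4$ nor the fairness constant $1/16$.
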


\citet{SecretarieswithAdviceYoshida} define the prediction error as in~\Cref{thm: recovering the results of Fujii and Yoshida} and design an algorithm that accepts a candidate with expected value at least $ u_\ssi \cdot\max \left\{ (1 - \predErr)/(1 + \predErr), 0.215 \right\} $. Since $(1 - \predErr)/(1 + \predErr) \geq 1 - 2 \predErr$ their algorithm satisfies a smoothness desideratum similar to the one in~\Cref{thm: recovering the results of Fujii and Yoshida}, but as we prove in~\Cref{subsec: unfair outcomes in previous work}, it violates the fairness desideratum.

\section{Extension: $k$-Secretary problem with predictions}\label{sec: the k-pegging algorithm main paper}
We consider the generalization to the $k$-secretary problem, where  $k \geq 1$ candidates can be accepted.
To simplify notation we label the candidates in decreasing order of predicted value, so that $\hu_1>\cdots>\hu_n$ and denote $\ver{\ell}$ to be the index of the candidate with the $\ell$'th highest true value so that $u_{\ver{1}}>\cdots>u_{\ver{n}}$.
The prediction error is again defined as $\predErr (\cI):=\max_i|u_i-\hu_i|$ and we let $S$ denote the random set of candidates accepted by a given algorithm on a fixed instance. The extension of our two objectives to this setting is
\begin{align*}
\Expect{\sum_{i\in S}u_i} &\geq  \sum_{\ell=1}^k u_{\ver{\ell}} - C \cdot\predErr(\cI) , \; \forall\cI &\text{  \textit{(smoothness for $k$-secretary})} 
\\ \Prob{\ver{\ell} \in S} &\geq F_\ell, \; \forall \ell=1,\ldots,k, \;\;\forall\cI. &\text{  \textit{(fairness for $k$-secretary}) }  
\end{align*}

The smoothness desideratum compares the expected sum of true values accepted by the algorithm to the sum of the $k$ highest true values that could have been accepted.
The fairness desideratum guarantees each of the candidates ranked $\ell=1,\ldots,k$ to be accepted with probability $F_\ell$. The $k$-secretary problem with predictions has been studied by~\citet{SecretarieswithAdviceYoshida}, who derive an algorithm satisfying $\Expect{\sum_{i\in S}u_i}\ge\max \{1 - O(\log k /\sqrt{k}) , 1 - O (\max_i \left| 1 - \hu_ i / u_i\right|)\}\sum_{\ell=1}^k u_{\ver{\ell}}$ but without any fairness guarantees.
We derive an algorithm \kpegging that satisfies the following.

\begin{restatable}{theorem}{maintheoremksecretaries}\label{thm: main theorem k secretaries}
\kpegging satisfies smoothness and fairness for $k$-secretary with $C=4 k$ and $F_\ell=\max \left\{(1/3)^{k+5}, \frac{1 - (\ell+13)/k}{256}   \right\}$ for all $\ell=1,\ldots,k$.
\end{restatable}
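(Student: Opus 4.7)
I would first describe $\kpegging$ as the natural generalization of $\addpegging$. Relabel candidates so that $\hu_1 > \cdots > \hu_n$ and view each of the top-$k$ predicted candidates $i \in \{1, \ldots, k\}$ as playing the role that $\hi$ played in $\addpegging$: on arrival of such an $i$, either an analogue of $\cC \land \cF$ holds and we accept immediately, or we construct a pegged set of later candidates whose predicted values are close to $u_i$ and commit to the last such arrival. On arrival of a non-top-$k$-predicted candidate $i > k$, we accept only when a Dynkin-style running-max condition holds \emph{and} $u_i$ is close enough to $\hu_j$ for some slot $j \leq k$ that is still unfilled, mirroring the $\overline{\cC} \land \cF$ branch of \Cref{alg: the pegging algorithm}. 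The algorithm halts after $k$ acceptances, and as in \Cref{lem:consistent} one argues it always fills all $k$ slots.

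The smoothness bound $C = 4k$ then follows by a per-slot analogue of the chain of inequalities in \Cref{lem:consistent}: each accepted candidate is matched to a distinct index $j \in \{1,\ldots,k\}$ so that its true value is at least $u_{\ver{j}} - 4\predErr$, via $\hu_j \geq \hu_{\ver{j}} \geq u_{\ver{j}} - \predErr$ and the pegging slack. Summing across the $k$ matched slots gives additive loss at most $4k\cdot\predErr(\cI)$, which is the desired $C=4k$.

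For fairness, for each $\ell \in \{1,\ldots,k\}$ I would exhibit two disjoint favourable arrival-time events each of which forces $\ver{\ell} \in S$, corresponding to the two terms inside the $\max$. The \emph{refined} event, giving $(1-(\ell+13)/k)/256$, asks that at least $k - \ell - 13$ of the top-$k$ predicted candidates arrive before time $1/2$ (so they either accept themselves or get pegged and out of the way), that $\ver{\ell}$ arrives after $1/2$, and that roughly three additional Dynkin-style relative-order constraints on $\ver{\ell}$, its closest rivals, and the surviving top predicted slot hold. Independence of arrival times gives the $1/256$ factor from the order constraints, while a simple expectation/Markov argument on the number of top-$k$ predicted arrivals in $[0, 1/2]$ contributes the $(1 - (\ell + 13)/k)$ factor. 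The \emph{crude} event, giving $(1/3)^{k+5}$, instead prescribes a specific relative ordering of the $O(k)$ most relevant candidates, each relative-order constraint contributing a factor of at least $1/3$, analogous to the sub-case analysis of \Cref{lem:fair}; this event survives regardless of how badly the pegged sets overlap because it pins down the ordering globally.

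The main obstacle will be controlling the interaction between the pegged sets of different top-$k$ predicted candidates and the shared capacity budget $k$. One must rule out scenarios where (i) pegged acceptances triggered by earlier top-$k$ predicted candidates exhaust capacity before $\ver{\ell}$ arrives, and (ii) the $\overline{\cC} \land \cF$ branch fires for an inferior non-predicted candidate and pre-empts the slot that $\ver{\ell}$ would have taken. This interaction is exactly what forces the two-regime form of $F_\ell$: when $\ell$ is sufficiently below $k$, arrival-time independence buys the $(1-(\ell+13)/k)/256$ guarantee, but when $\ell$ is close to $k$ one must pay the worst-case $(1/3)^{k+5}$ price to survive all possible pegging interactions.
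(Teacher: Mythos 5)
Your smoothness plan is essentially the paper's argument (an injection from the accepted set to $[k]$, or equivalently from $[k]$ into $S$, losing at most $2\predErr$ per matched pair in each of two comparison steps, summing to $4k\predErr$; see \Cref{lem: consistency for the k pegging algorithm}), so that part is fine. The genuine gap is in the refined fairness bound. Your event asks that at least $k-\ell-13$ of the top-$k$ \emph{predicted} candidates arrive before time $1/2$, and you claim a Markov/expectation argument on this count supplies the factor $1-(\ell+13)/k$. But the number of top-$k$ predicted candidates arriving before $1/2$ concentrates around $k/2$, so the probability that $k-\ell-13$ of them arrive early is exponentially small in $k$ whenever $\ell \ll k/2$ --- which is exactly the regime where $F_\ell=\bigl(1-(\ell+13)/k\bigr)/256$ is supposed to be a constant (the paper uses $\ell\le k/2$, $k\ge 52$ to get $\ge 1/1024$). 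So this event cannot yield the claimed bound, and the factor $1-(\ell+13)/k$ does not come from where you say it does. In the paper's proof (\Cref{lem: large k robustness for the k pegging algorithm}) the relevant set is instead the ``replacement set'' $R=\{j\in[k]: u_j<u_{\ver{\ell}}\}$, whose size is \emph{deterministically} at least about $(1-(\ell+1)/k)k$; the event requires enough of $R$ to arrive after $1/2+\delta$ (Markov), few of the largest true values to land in $[1/2,1/2+\delta]$ (Markov), the worst-error member of $R$ to precede $\ver{\ell}$, and $t_{\ver{\ell}}\in(1/2,1/2+\delta)$ with $\delta$ proportional to $1-(\ell+1)/k$ --- it is the width of this arrival window, not a count of early top-$k$ arrivals, that produces the linear factor.

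Two further points. First, your algorithm description omits the blaming set $B$ and subcase 4a of \kpegging (accept an arriving candidate if some already-arrived $j\in[k]$ that pegged a future candidate has $u_j<u_i$); the paper's exponential bound $(1/3)^{k+5}$ relies on exactly this mechanism when $\ver{\ell}\notin[k]$ and the inequality $u_j<\hu_{\ver{\ell}}+\epsilon_j$ holds, since $\ver{\ell}$ may then be accepted either as a pegged candidate or by displacing $j$ from $B$. Second, the interaction you correctly flag as ``the main obstacle'' (a pegged candidate of $j$ arriving before $\ver{\ell}$ and consuming capacity or removing $j$ from $B$) is not something you resolve; the paper handles it by conditioning on the arrival-time event and a stochastic-domination/coupling argument showing $\Prob{t_{\mathrm{peg}(j)}\ge t_{\ver{\ell}}}\ge 3/8$ given that event, which is the step your plan leaves open. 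As written, the proposal would not go through for the second term of $F_\ell$ and is incomplete for the first.
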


We note that the algorithm of~\citet{kleinberg2005multiple} for $k$-Secretary (without predictions) obtains in expectation at least
$$\left(1-\frac{5}{\sqrt{k}}\right)\sum_{\ell=1}^k u_{\ver{\ell}}\ge \sum_{\ell=1}^k u_{\ver{\ell}} - 5\sqrt{k} \max_i u_i,$$
which has a better asymptotic dependence on $k$ than our smoothness constant $C=4k$ if the prediction error is relatively large, i.e., $\eps(\cI) = \omega \left(\max_i u_i/\sqrt{k} \right)$. On the other hand,
regarding our fairness guarantee, if one only considers our fairness desideratum for $k$-secretary, then a simple algorithm suffices to achieve $F_\ell=1/4$ for all $\ell=1,\ldots,k$, namely: reject all candidates $i$ with $t_i<1/2$; accept any candidate $i$ with $t_i>1/2$ whose true value $u_i$ is among the $k$ highest true values observed so far, space permitting.

For any of the top $k$ candidates, i.e., candidate $r_\ell$ with $\ell \in [k]$, their value $u_{r_\ell}$ is always greater than the threshold $\tau$, which our algorithm recomputes upon the arrival of each candidate. Consequently, candidate $r_\ell$ is added to our solution if the following conditions are satisfied: (1) $t_{r_\ell} > 1/2$; and (2) there is space available in the solution when $r_\ell$ arrives.
For condition (2) to hold, it suffices that at least $k$ of the $2k-1$ candidates with the highest values other than $r_\ell$ arrive before time $1/2$. This ensures that at most $k-1$ candidates other than $r_\ell$ can be accepted after time $1/2$.
The probability of both conditions (1) and (2) being satisfied is at least $\Prob{t_{r_\ell} > 1/2 \land \mathrm{Binom}(2k-1, \frac{1}{2})} = \Prob{t_{r_\ell} > 1/2} \cdot \Prob{\mathrm{Binom}(2k-1, \frac{1}{2}) \geq k} =\frac{1}{2} \cdot \frac{1}{2} =\frac{1}{4}$, establishing that $F_\ell = \frac{1}{4}$ for all $\ell = 1, \ldots, k$.

Assuming $k$ is a constant, $C$ and $F_1,\ldots,F_k$ in \Cref{thm: main theorem k secretaries} are constants that do not depend on $n$ or the instance $\cI$.  %We note that the linear scaling of $C$ with $k$ is unavoidable due to $\eps(\cI)$ being defined as the maximum error (instead of, e.g., the sum of errors).  Meanwhile, 
For large values of $k$ the first term in $F_\ell$ is exponentially decaying, but the second term still guarantees candidate $\ver{\ell}$ a probability of acceptance that is independent of $k$ as long as $(\ell+13)/k$ is bounded away from 1.
More precisely, for $k\geq 52$ and $l \leq k/2$ we have that candidate $\ver{\ell}$ is accepted with probability at least $\frac{1}{1024}$, i.e., every candidate among the top $k/2$ is accepted with probability at least $\frac{1}{1024}$, thus 
$\Expect{\sum_{i\in S}u_i} \geq \frac{1}{1024} \sum_{\ell=1}^{k/2} u_{\ver{\ell}} \geq \frac{1}{2048} \sum_{\ell=1}^{k} u_{\ver{\ell}}$. This implies a multiplicative guarantee on total value that does not depend on $k$ when $k$ is large.

\textbf{The algorithm.} While we defer the proof of~\Cref{thm: main theorem k secretaries} to~\Cref{sec: the k pegging algorithmJ}, we present the intuition and the main technical difficulties in the design of \kpegging. The algorithm maintains in an online manner the following sets: (1) the solution set $S$ which contains all the candidates that have already been accepted; (2) a set $H$ that we call the ``Hopefuls'' and contains the $k - |S|$ future candidates with highest predicted values; (3) a set $B$ that we call the blaming set, which contains a subset of already arrived candidates that pegged a future candidate; and (4) the set $P$ of pegged elements which contains all candidates that have been pegged by a candidate in $B$. In addition, we use function $\mathrm{peg}$ to store the ``pegging responsibility'', i.e., if $\pegfunc{i} = j$, for some candidates $i,j$ where $i$ had one of the $k$ highest predicted values, then $i$ was not accepted at the time of its arrival and pegged $j$. We use $\peginvfunc{j} = i$ to denote that $j$ was pegged by $i$.

\begin{algorithm}
\caption{\kpegging}\label{alg: the k pegging algorithm}
\begin{algorithmic}
\State //* The algorithm stops when it accepts $k$ candidates, i.e., when $\abs{S}=k$. *//
\State \textbf{Initialization:}  $H \gets [k], S \gets \emptyset, P \gets \emptyset, B \gets \emptyset$
\While{agent $i$ arrives at time $t_i$}
\If{$i\in P$} \Comment{Case 1}
    \State Add $i$ to $S$, remove $i$ from $P$, and remove $\peginvfunc{i}$ from $B$
\EndIf
\State $\tau \gets k^{th} \text{ highest true value strictly before } t_i$
\State $\cF \gets \left(  u_i > \tau \right) \land \left( t_i > 1/2\right), \cC \gets \left( i \in H \right)$, $\et{i} \gets \max_{j: t_j \leq t_i}  \abs{\hu_j - u_j}$
\If{$\cC \land \cF$} \Comment{Case 2}
    \State Add $i$ to $S$ and remove $i$ from $H$
\ElsIf{$\cC \land \overline{\cF}$} \Comment{Case 3}
    \If{$\{ j \succ i : u_i < \hu_j + \et{i}\} \setminus (P \cup [k]) = \emptyset$} \Comment{subcase a}
        \State Add $i$ to $S$ and remove $i$ from $H$
    \Else \Comment{subcase b}
        \State $j' \gets  \text{An arbitrary candidate from } \{ j \succ i : u_i < \hu_j + \et{i}\} \setminus (P \cup [k])$ 
        \State Add $j'$ to $P$, add $i$ to $B$, remove $i$ from $H$, and set $\pegfunc{i} = j'$
    \EndIf
\ElsIf{$\overline{\cC} \land \cF$} \Comment{Case 4}
    \If{$\{ j \in B :  u_i > u_j\}  \neq \emptyset$} \Comment{subcase a}
        \State $j' \gets  \text{An arbitrary candidate from } \{ j \in B :  u_i > u_j\}$ 
        \State Add $i$ to $S$, remove $j'$ from $B$, and remove $\pegfunc{j'}$ from $P$
    \ElsIf{$ \{j \in H :  u_i > \hu_j - \et{i}\} \neq \emptyset$} \Comment{subcase b}
        \State $j' \gets  \text{An arbitrary candidate from }\{j \in H :  u_i > \hu_j - \et{i}\}$ 
        \State Add $i$ to $S$ and remove $j'$ from $H$
    \EndIf
\EndIf
\EndWhile
\end{algorithmic}
\end{algorithm}

To satisfy the fairness property, we check if the current candidate $i$ has arrived at time $t_i > 1/2$ and if $u_i$ is larger than the $k^{th}$ highest value seen so far. We refer to these two conditions as the fairness conditions. If $i \in P$ (\textit{case 1}) or  $i \in H$ and the fairness conditions hold (\textit{case 2}), then we accept $i$. If the fairness conditions hold but $i \not \in H$ then we accept if there is a past candidate in $B$ with lower true value than $u_i$ (\textit{subcase 4a}), or a future candidate in $H$ with low predicted value compared to $u_i$ (\textit{subcase 4b}).
The main technical challenge in generalizing the pegging idea to $k>1$ arises when a candidate $i \in H$ arrives, but the fairness conditions do not hold (\textit{case 3}).  In this situation, it is unclear whether to reject $i$ and peg a future candidate, or accept $i$.  For instance, consider a scenario where the prediction error is consistently large (i.e., $\varepsilon_{t_i}$ is always large), such that when $i$ arrives, the set $\{ j \succ i : u_i < \hu_j + \et{i}\} \setminus (P \cup [k])$ is always non-empty.
If $t_i < 1/2$ and we accept $i$, we risk depleting our budget too quickly before time $1/2$, leaving insufficient capacity to accept candidates not in $[k]$ who arrive later. Conversely, if we reject $i$, we deny it the possibility of acceptance in the first half of the time horizon, potentially reducing its overall acceptance probability. \kpegging balances this tradeoff while achieving smoothness.
To establish smoothness, we demonstrate that the $k$ candidates with the highest predicted values can be mapped to the solution set $S$, ensuring that the true values within our solution set are pairwise ``close'' to the values of candidates in $\{1, 2, \dots, k\}$. This is proven in~\Cref{lem: consistency for the k pegging algorithm} by constructing an injective function $\mathrm{m (\cdot)}$ from set $S$ to $\{ 1, 2, \dots, k\}$ such that for each $j \in S$, $u_j \approx u_{\mathrm{m}(j)}$.

% naming for our algorithm
\newcommand{\ldynkins}{\textsc{Learned-Dynkin}\xspace}
\newcommand{\dynkins}{\textsc{Dynkin}\xspace}
\newcommand{\highestpred}{\textsc{Highest-prediction}\xspace}
\newcommand{\random}{\textsc{Random}\xspace}
% naming for the datasets
\newcommand{\almostconstant}{\textsf{Almost-constant}\xspace}
\newcommand{\uniform}{\textsf{Uniform}\xspace}
\newcommand{\adversarial}{\textsf{Adversarial}\xspace}
\newcommand{\unfair}{\textsf{Unfair}\xspace}
\section{Experiments}\label{sec: experiments}

We simulate our \addpegging and \mulpegging algorithms in the exact experimental setup of \citet{SecretarieswithAdviceYoshida}, to test its average-case performance.

\textbf{Experimental Setup.} \citet{SecretarieswithAdviceYoshida} generate various types of instances.  We follow their \almostconstant, \uniform, and \adversarial types of instances, and also create the \unfair type of instance to further highlight how slightly biased predictions can lead to very unfair outcomes. Both true and predicted values of candidates in all these instance types are parameterized by a scalar $\varepsilon \in [0, 1)$ which controls the prediction error.
Setting $\varepsilon = 0$ creates instances with perfect predictions and setting a higher value of $\varepsilon$ creates instances with more erroneous predictions.
%%%%%%%
\almostconstant models a situation where one candidate has a true value of $1/(1 - \varepsilon)$ and the rest of the candidates have a value of $1$. All predictions are set to $1$. In \uniform, we sample each $u_i$ independently from the exponential distribution with parameter $1$. The exponential distribution generates a large value with a small probability and consequently models a situation where one candidate is significantly better than the rest. All predicted values are generated by perturbing the actual value with the uniform distribution, i.e., $\hat{u}_i = \delta_i \cdot u_i$, where $\delta_i$ is sampled uniformly and independently from $[1 - \varepsilon, 1 + \varepsilon]$.
In \adversarial, the true values are again independent samples from the exponential distribution with parameter $1$. The predictions are ``adversarially'' perturbed while maintaining the error to be at most $\varepsilon$ in the following manner: if $i$ belongs to the top half of candidates in terms of true value, then $\hat{u}_i = (1 - \varepsilon) \cdot u_i$; if $i$ belongs to the bottom half, then $\hat{u}_i = (1 + \varepsilon) \cdot u_i$. Finally, in \unfair all candidates have values that are at most a $(1 + \varepsilon)$ multiplicative factor apart. Formally, $u_i$ is a uniform value in $[1 - \varepsilon/4, 1 + \varepsilon/4]$, and since $(1 + \varepsilon/4)/(1 - \varepsilon/4) \leq (1 + \varepsilon)$ we have that the smallest and largest value are indeed very close.
We set $\hat{u}_i = u_{n - r(i) + 1}$ where $r(i)$ is the rank of $u_i$, i.e., predictions create a completely inverted order.

We compare \addpegging and \mulpegging against \ldynkins \cite{SecretarieswithAdviceYoshida}, \highestpred which always accepts the candidate with the highest prediction, and the classical \dynkins algorithm which does not use the predictions. Following \cite{SecretarieswithAdviceYoshida}, we set the number of candidates to be $n = 100$.  We experiment with all values of $\varepsilon $ in $ \{0, 1/20, 2/20, \dots, 19/20\}$.  For each type of instance and value of $\varepsilon$ in this set, we randomly generate 10000 instances, and then run each algorithm on each instance. For each algorithm, we consider instance-wise the ratio of the true value it accepted to the maximum true value, calling the average of this ratio across the 10000 instances its \textit{competitive ratio}.  For each algorithm, we consider the fraction of the 10000 instances on which it successfully accepted the candidate with the highest true value, calling this fraction its \textit{fairness}.  We report the competitive ratio and fairness of each algorithm, for each type of instance and each value of $\varepsilon$, in \Cref{fig:experiments}. Our code is written in Python 3.11.5 and we conduct experiments on an M3 Pro CPU with 18 GB of RAM. The total runtime is less than $5$ minutes.

\textbf{Results.} The results are summarized in~\cref{fig:experiments}. Since \addpegging and \mulpegging achieve almost the same competitive ratio and fairness for all instance types and values of $\varepsilon$ we only present \addpegging in~\cref{fig:experiments} but include the code of both in the supplementary material.
Our algorithms are consistently either the best or close to the best in terms of both competitive ratio and fairness for all different pairs of instance types and $\varepsilon$ values.
Before discussing the results of each instance type individually it is instructive to mention some characteristics of our benchmarks. While \dynkins does not use predictions and is therefore bound to suboptimal competitive ratios when predictions are accurate, we note that it accepts the maximum value candidate with probability at least $1/e$, i.e., it is $1/e$-fair. When predictions are non-informative this is an upper bound on the attainable fairness for any algorithm whether it uses predictions or not. \highestpred is expected to perform well when the highest prediction matches the true highest value candidate and poorly when the latter is not true. In \almostconstant for small values of $\varepsilon$ all candidates have very close true values and all algorithms except \dynkins have a competitive ratio close to $1$.  \dynkins may not accept any candidate and this is why its performance is poorer than the rest of the algorithms. Note that as $\varepsilon$ increases both our algorithms perform significantly better than all other benchmarks. 

In terms of fairness, predictions do not offer any information regarding the ordinal comparison between candidates' true values and this is why for small values of $\varepsilon$ the probability of \highestpred and \ldynkins of accepting the best candidate is close to $1/100 = 1/n$, i.e., random. Here, the fairness of our algorithms and \dynkins is similar and close to $1/e$. In both \uniform and \adversarial we observe that for small values of $\varepsilon$ the highest predicted candidate is the true highest and \addpegging, \ldynkins and \highestpred all accept that candidate having a very close performance both in terms of fairness and competitive ratio. For higher values of $\varepsilon$ the fairness of those algorithms deteriorates similarly and it approaches again $0.37 \simeq 1/e$.  
In \unfair our algorithms outperform all other benchmarks in terms of competitive ratio for all values of $\varepsilon$ and achieve a close to optimal fairness. This is expected as our algorithms are particularly suited for cases where predictions may be accurate but unfair.

\begin{figure}
    \centering
    \includegraphics[width=\textwidth]{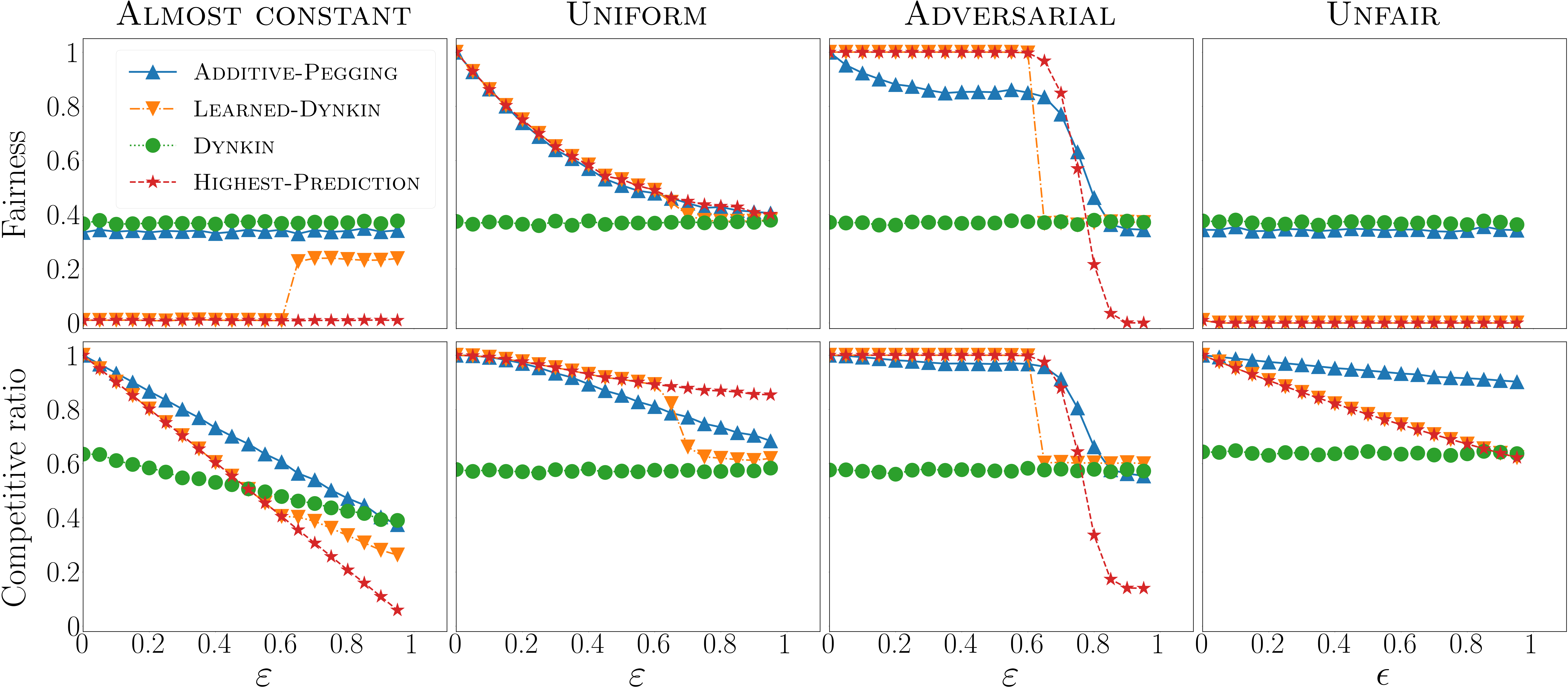} 
    \vspace{-6mm}
    \caption{Competitive ratio and fairness of different algorithms, for each instance type and level of $\predErr$.
    }
    \label{fig:experiments}
\end{figure}

Overall, our algorithms are the best-performing and most robust.  The \highestpred algorithm does perform slightly better on \uniform instances and \adversarial instances under most values of $\varepsilon$, but performs consistently worse on \almostconstant and \unfair instances, especially in terms of fairness.  Our algorithms perform better than \ldynkins in almost all situations.

\section{Limitations and Future Work}\label{sec: limitations}
We study a notion of fairness that is tailored to the secretary problem with predictions and build our algorithms based on this notion.
However, there are alternative notions of fairness one could consider in applications such as hiring, as well as variations of the secretary problem that capture other features in these applications.
While our model allows for arbitrary bias in the predictions we assume that the true value of a candidate is fully discovered upon arrival, and define fairness based on hiring the best candidate (who has the highest true value) with a reasonable probability. Thus, we ignore considerations such as bias in how we get the true value of a candidate (e.g., via an interview process).
In addition, as noted in~\Cref{sec: Introduction}, we use an \textit{individual fairness} notion which does not model other natural desiderata like hiring from underprivileged populations or balance the hiring probabilities across different populations. These are considerations with potentially high societal impact which our algorithms do not consider and are interesting directions for future work on fair selection with predictions.

Regarding trade-offs in our guarantees: for the single-secretary problem, we can improve the fairness guarantee from $1/16$ to $0.074$ by optimizing the constants in our algorithm. However, we choose not to do so, as the performance increase is marginal, and we aim to keep the proof as simple as possible.
Additionally, as we noted in~\Cref{sec:related} any constant $C$ for smoothness implies an upper bound of $F = 1/3 + o(1)$ for fairness. Finding the Pareto-optimal curve in terms of smoothness and fairness is an interesting direction. The main challenge in achieving a smooth trade-off between fairness and smoothness is as follows: any bound on $C$ for smoothness implies a competitive ratio of $1 - C \epsilon$, which reaches a ratio of 1 when the predictions are exactly correct. Thus, regardless of the smoothness guarantee, we must achieve a competitive ratio of 1 when predictions are fully accurate. This constraint makes it challenging to improve the fairness guarantee $F$, even at the cost of a less favorable smoothness constant $C$.

\bibliographystyle{plainnat}

\bibliography{references}

%%%%%%%%%%%%%%%%%%%%%%%%%%%%%%%%%%%%%%%%%%%%%%%%%%%%%%%%%%%%
\newpage
\appendix
\section{Additional discussion and missing analysis for single secretary}\label{sec: Appendix}
\newcommand{\valuemaxsecretary}{\textsc{Value-maximization secretary }}
\subsection{Unfair outcomes in previous work}\label{subsec: unfair outcomes in previous work}
In this section we present the learning-augmented algorithms of~\cite{antoniadis2020online} and~\cite{SecretarieswithAdviceYoshida}, and argue that they fail to satisfy simultaneously the smoothness and fairness desiderata described in~\cref{sec: model and preliminaries}. We follow the same notation as in the main paper where the $\ssi, \hi$ denote the index of the candidate with maximum true and predicted value respectively. Since the algorithm in~\cite{antoniadis2020online} requires only the prediction about the maximum value but not the identity of that candidate, we use the symbol $\hu^*$ to denote such value.

\begin{algorithm}
\caption{\ldynkins~\cite{SecretarieswithAdviceYoshida}}\label{alg: learned Dynkin}
\begin{algorithmic}
\State $\theta \gets 0.646$, $t \gets 0.313$,  mode $\gets$ Prediction
\While{agent $i$ arrives at time $t_i$}
    \State $\tau \gets \max_{j\prec i}u_j$
    \If{$\abs{1 - \hu_i/u_i} > \theta$}
        \State mode $\gets$ Secretary.
    \EndIf
    \If{mode = Prediction and $i = \hi$}
        \State $\cA \gets i$.
    \EndIf
    \If{mode = Secretary and $t_i > t$ and $u_i > \tau$}
        \State  $\cA \gets i$.
    \EndIf
\EndWhile
\end{algorithmic}
\end{algorithm}

\begin{algorithm}
\caption{\valuemaxsecretary~\cite{antoniadis2020online}}\label{alg: antoniadis algorithm}
\begin{algorithmic}
\State \textbf{Input:} parameters $c, \lambda$ such that $\lambda \geq 0$ and $c \geq 1$. 
\State $t^* \gets \exp \{W^{-1}(-1/(ce)) \}$, $t^{**} \gets \exp\{W^0(-1/(ce))\}$
\While{agent $i$ arrives at time $t_i$}
    \State $\tau^* \gets \max_{j: t_j <  t^*} u_j$
     \If{$t^* < t_i <  t^{**}$ \textbf{and} $u_i > \max \left\{\tau^* , \hu^* - \lambda \right\}$}
        \State $\cA \gets i$
    \EndIf
    \State $\tau^{**} \gets \max_{j: t_j <  t^{**}} u_j$
    \If{$t_i \geq  t^{**}$ \textbf{and} $u_i > \tau^{**} $}
        \State $\cA \gets i$
    \EndIf
\EndWhile
\end{algorithmic}
\end{algorithm}
\paragraph{} \ldynkins of~\citet{SecretarieswithAdviceYoshida} receives a predicted valuation for all candidates and defines the prediction error of candidate $i$ as $\abs{1 - \hu_i / u_i}$. If the prediction error of a candidate is higher than $\theta$ then it switches to Secretary mode where it mimics the classical \dynkins algorithm where all candidates are rejected for a constant fraction of the stream and after that rejection phase the first candidate whose valuation is the maximum overall is hired. Note that if all candidates have very low prediction error then \ldynkins remains in the Prediction mode and the candidate with the higher prediction is hired.
One instance where \ldynkins never accepts candidate $\ssi$ is the following: there are two candidates with true valuations $u_1 = 1 + \nicefrac{\theta'}{2},u_2 =  1$ and predicted valuations $\hat{u}_1 = 1 + \nicefrac{\theta'}{2}, \hat{u}_2 = 1 + \theta'$ where $\theta' < \theta = 0.646$. The prediction error is equal to $\theta' < \theta = 0.646$. Consequently \ldynkins does not switch to Prediction mode, and it never accepts the candidate with true valuation $1+\nicefrac{\theta'}{2}$, violating the Fairness desideratum.
\paragraph{} \valuemaxsecretary of~\cite{antoniadis2020online} receives only one prediction regarding the maximum value $\hu^*$ and the prediction error is defined as $\predErr = \abs{ u_\ssi - \hu^*} $. The latter algorithm is parametrized by $\lambda \geq 0$ and $ c \geq 1$ which control the relationship between the robustness and smoothness bounds. \valuemaxsecretary has three distinct phases defined by the time ranges $[0, t^*], (t^*, t^{**})$ and $[t^{**}, 1]$ respectively, where $t^*, t^{**}$ are defined using the Lambert functions $W^{-1} $and $ W^0$. The first phase is used as an ``exploration'' phase where all candidates are rejected and at the end of the phase a threshold $\tau^*$ is computed. During the second phase the algorithm accepts a candidate if and only if the true value of the candidate is larger than $\max \{\tau^*, \hu^* - \lambda \}$. Finally, if at the end of the second phase no candidate has been selected then the algorithm accepts a candidate if their true value is the maximum so far (note that in the pseudocode this is done by computing the threshold $\tau^{**}$). 

To demonstrate the failure of \valuemaxsecretary, let $\cA$ be the random variable denoting the candidate it accepts. For any $\predErr \in (0, 1)$ we define an instance with predicted maximum value $\hu^* = 1-\predErr $ and true values $ \{ u_1, u_2 ,\dots, u_n \}$ where all numbers are distinct, $u_1 = 1$ and $u_i \in [0, \predErr] \;, \forall i \in \{2, \dots, n\}$.
The prediction error of that instance is $\predErr$. Note that if $\ssi = 1$ arrives in the first phase then the maximum value of a candidate that \valuemaxsecretary can accept in the second and third phases is at most $\predErr$. Thus, we can upper bound the expected value of candidate $\cA$ as follows: $\Expect{u_{\cA}} \leq \Prob{t_{\ssi} \geq t^*  } \cdot u_\ssi +  \Prob{t_{\ssi} < t^*  } \cdot \predErr = (1 - t^*)  u_\ssi + t^* \predErr $.

We emphasize that in the learning-augmented setting, there is no assumption regarding the quality of the prediction; thus, the parameters $c$ and $\lambda$ cannot depend on the prediction error. For any parameter $c\ge1$ (and any $\lambda$), we have that $t^* > 0$ is a constant that is bounded away from $0$.
Towards a contradiction, assume that \valuemaxsecretary satisfies the smoothness desideratum described in~\cref{sec: model and preliminaries} for some parameter $C$. Then, we have that $\Expect{u_{\cA}} \geq u_\ssi - C\cdot \predErr$. Consequently,  $(1 - t^*)  u_\ssi + t^* \predErr \geq u_\ssi - C\cdot \predErr \xRightarrow{u_\ssi = 1} (1 - t^*)  + t^* \predErr \geq 1 - C\cdot \predErr$ which leads to contradiction when we let $\predErr \xrightarrow{} 0$.
\subsection{The \pegging algorithm} \label{subsec: appendix the pegging algorithm}
In this subsection, we generalize the \addpegging algorithm so as to provide fair and smooth algorithms for different prediction error definitions.
We use the direct sum symbol $\oplus $ to denote the addition or multiplication operation, $ \ominus $ for its inverse operation, i.e., subtraction or division, and ${\bf 0}$ for the identity element of $\oplus$, i.e, either the number $0$ if $\oplus$ denotes the classical addition operation or $1$ if $\oplus$ denotes the multiplication operation.

To model the ``difference'' between predictions and true values we use a function {${\epsilon : \Re_{> 0} \times \Re_{> 0} \xrightarrow{} \Re_{> 0} }$}. We assume that $\epsilon$ receives as input only tuples of strictly positive reals so that the multiplicative error definition of~\cite{SecretarieswithAdviceYoshida} is not ill-defined\footnote{While we could allow zero-valued inputs by extending the image set to $\mathbb{R}_{> 0} \cup \{ \infty\}$, we avoid this for simplicity.}.
Furthermore, as noted in the main paper, we assume distinct true and predicted values. This is achieved by introducing arbitrarily small perturbations, a simplification that does not affect our algorithm's performance and simplifies proofs by avoiding potentially cumbersome case distinctions arising from subtractions involving zero.

Our techniques require the following inequality to be true:
\begin{restatable}{assumption}{assumptionsondistancefunction}\label{assum: function d with respect to operations}$
        u \oplus ({\bf 0 } + \epsilon(u, \hu) ) \geq  \hu \geq u  \oplus   ({\bf 0 } - \epsilon(u, \hu) ), \forall u, \hu \in \Re_{> 0}, u \neq \hu$
\end{restatable}

\Cref{assum: function d with respect to operations} essentially demands that a prediction $\hu$ can be upper and lower bounded as a function of the true value $u$ and the ``distance'' $\epsilon (u, \hu)$ between true and predicted value. While the notation is left abstract to highlight the generality of~\Cref{thm: main generalized theorem for single choice secretary}, we provide examples on how to instantiate the error function and operators for natural prediction errors such as the absolute difference that we used in \addpegging and a multiplicative prediction error function that is used by~\citet{SecretarieswithAdviceYoshida}. To be specific, it is easy to check that \Cref{assum: function d with respect to operations} holds when:
\begin{itemize}
\item $\oplus$ is addition, ${\bf 0 }=0$, and $\epsilon(u,\hu)=|\hu-u|=|u-\hu|$;
\item $\oplus$ is multiplication, ${\bf 0 }=1$, and $\epsilon(u,\hu)=|1-\hu/u|$;
\item $\oplus$ is multiplication, ${\bf 0 }=1$, and $\epsilon(u,\hu)=|1-\max\{u,\hu\}/\min\{u,\hu\}|$ (this follows from above because $|1-\max\{u,\hu\}/\min\{u,\hu\}|\ge|1-\hu/u|$).
\end{itemize}
It is worth noting that \Cref{assum: function d with respect to operations} does not hold when $\oplus$ represents the scalar multiplication, ${\bf 0 }=1$, and $\epsilon(u,\hu)=|1-u/\hu|$. However, this definition of $\epsilon(u,\hu)$ is arguably less conventional than $\epsilon(u,\hu) = |1-\hu/u|$ which is employed in \cite{SecretarieswithAdviceYoshida}.

Using the new abstract notation,
we write $\epsilon_i = \epsilon (u_i, \hu_i)$ and define the prediction error of our instance as $\predErr = \max_i \epsilon_i$.
We can generalize \addpegging to \pegging by making the following modifications:
\begin{enumerate}
    \item Update the condition \( u_i > \hu_\hi - \et{i}  \Leftrightarrow  u_i + \et{i}> \hu_\hi \) to \( u_i \oplus ( {\bf 0 } + \et{i} )> \hu_\hi \).
    \item Redefine the update rule of the ``pegging'' set from $I^\mathsf{pegged}\gets\{j\succ \hat{\imath}: u_\hi<\hu_j+ \et{\hi} \}$ to $I^\mathsf{pegged}\gets\{j\succ \hat{\imath}: u_\hi \oplus  ({\bf 0} - \et{\hi}) <\hu_j \}$.
\end{enumerate}

\begin{algorithm}
\caption{\pegging}\label{alg: the general pegging algorithm}
\begin{algorithmic}
\State //* The algorithm stops when it accepts a candidate by executing $\cA\gets i$. *//
\State \textbf{Initialization:} $I^\mathsf{pegged} \gets \emptyset$ 
\While{agent $i$ arrives at time $t_i$}
\If{$i\in I^\mathsf{pegged}$}
    \If{$|I^\mathsf{pegged}|=1$}
        \State $\cA\gets i$
    \Else
        \State $I^\mathsf{pegged}\gets I^\mathsf{pegged}\setminus\{i\}$
    \EndIf
\EndIf
    \State $\cF \gets \left(  u_i > \max_{j\prec i}u_j \right) \land \left(  t_i > 1/2 \right), \cC \gets \left( i = \hi \right), \et{i} \gets \max_{j: t_j \leq t_i} \epsilon_j$
     \If{$\cC  \land  \cF$}
        \State $\cA\gets i$
    \ElsIf{$\overline{\cC}  \land \cF$}
        \If{$ u_i \oplus  ({\bf 0} +  \et{i})> \hu_\hi$}
            \State $\cA\gets i$
        \EndIf
    \ElsIf{$\cC \land \overline{\cF}$}
        \State $I^\mathsf{pegged}\gets\{j\succ \hat{\imath}: u_\hi \oplus  ({\bf 0} - \et{\hi}) <\hu_j \}$
        \If{$I^\mathsf{pegged} = \emptyset$}
            \State $\cA\gets i$
        \EndIf
    \EndIf
\EndWhile
\end{algorithmic}
\end{algorithm}

We proceed stating and proving the main theorem of this subsection.
\begin{restatable}{theorem}{generalizedpegging}
\label{thm: main generalized theorem for single choice secretary}
Suppose that $\oplus$ represents either scalar addition or scalar multiplication, with $\ominus$ being its inverse. Suppose that the error function $\epsilon(\cdot,\cdot)$ satisfies \Cref{assum: function d with respect to operations}. Then \pegging accepts the maximum value candidate with probability at least $\frac{1}{16}$ and its expected value is at least $\max \{  u_\ssi  \oplus   ({\bf 0 } - \predErr)\oplus  ({\bf 0 } - \predErr) \ominus  ({\bf 0 } + \predErr)\ominus  ({\bf 0 } + \predErr)  , \frac{1}{16} u_\ssi\}$
\end{restatable}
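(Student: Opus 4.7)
The plan is to mirror the proofs of Lemmas~\ref{lem:consistent} and~\ref{lem:fair}, replacing additive arithmetic with the abstract operators $\oplus,\ominus$ and invoking Assumption~\ref{assum: function d with respect to operations} whenever we must relate a predicted value $\hu_i$ to its true value $u_i$. Restricting $\oplus$ to scalar addition or scalar multiplication of positive reals guarantees that $a\mapsto u\oplus({\bf 0}+a)$ is monotone increasing and $a\mapsto u\oplus({\bf 0}-a)$ is monotone decreasing, which is the only structural fact about $\oplus$ we rely on beyond Assumption~\ref{assum: function d with respect to operations}.

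For the value guarantee I would first re-run the termination argument of Lemma~\ref{lem:consistent}: if no candidate is accepted before $t_\hi$, then at time $t_\hi$ either $\cC\land\cF$ holds (so $\hi$ is accepted), or $\cC\land\overline\cF$ holds, in which case either the pegging set is empty (and $\hi$ is accepted) or its last element $\iPegged$ is eventually accepted. Then I case on the three possible acceptances. (i)~If $\hi$ is accepted, Assumption~\ref{assum: function d with respect to operations} combined with $\hu_\hi\geq\hu_\ssi$ gives $u_\hi\oplus({\bf 0}+\epsilon_\hi)\geq\hu_\hi\geq\hu_\ssi\geq u_\ssi\oplus({\bf 0}-\epsilon_\ssi)$, hence $u_\hi\geq u_\ssi\oplus({\bf 0}-\predErr)\ominus({\bf 0}+\predErr)$. (ii)~If $\overline\cC\land\cF$ accepts some $i\neq\hi$, the acceptance test $u_i\oplus({\bf 0}+\et{i})>\hu_\hi$ together with the chain in (i) gives the same bound. (iii)~If $\iPegged$ is accepted, then the pegging-set condition $u_\hi\oplus({\bf 0}-\et{\hi})<\hu_\iPegged$ and Assumption~\ref{assum: function d with respect to operations} applied to $\iPegged$ give $u_\iPegged\geq u_\hi\oplus({\bf 0}-\predErr)\ominus({\bf 0}+\predErr)$, which composed with the bound from (i) on $u_\hi$ produces the theorem's smoothness expression with two copies of each operation. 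This part of the bound therefore holds with probability one.

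For fairness I would reuse the event-based case analysis of Lemma~\ref{lem:fair}. If $\ssi=\hi$, nothing changes. If $\ssi\neq\hi$, I first show that at least one of
\[u_\hi\oplus({\bf 0}-\epsilon_\hi)<\hu_\ssi\qquad\text{or}\qquad u_\ssi\oplus({\bf 0}+\epsilon_\ssi)>\hu_\hi\]
holds. Call these cases (a) and (b). In case (a) I define $E_1$ exactly as in Lemma~\ref{lem:fair} and observe that $\et{\hi}\geq\epsilon_\hi$ together with the inequality places $\ssi$ into the pegging set computed at $t_\hi$, forcing the algorithm to reach $\ssi$ at $t_\ssi$. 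In case (b) I define $E_2$ as before and observe that $\et{\ssi}\geq\epsilon_\ssi$ together with the inequality makes the $\overline\cC\land\cF$ acceptance test fire at $t_\ssi$. The probability computations for $E_1$ and $E_2$ are unchanged since they only involve relative arrival times, yielding the fairness constant $1/16$.

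The main obstacle is verifying this dichotomy in the abstract setting. Suppose both inequalities fail; combining each with Assumption~\ref{assum: function d with respect to operations} applied at $\ssi$ and $\hi$ gives
\[u_\hi\oplus({\bf 0}-\epsilon_\hi)\geq u_\ssi\oplus({\bf 0}-\epsilon_\ssi)\qquad\text{and}\qquad u_\ssi\oplus({\bf 0}+\epsilon_\ssi)\leq u_\hi\oplus({\bf 0}+\epsilon_\hi).\]
For $\oplus=+$, adding these two inequalities collapses to $u_\hi\geq u_\ssi$, contradicting $u_\ssi>u_\hi$. For $\oplus=\times$ with all quantities positive, dividing through by the appropriate positive factor and using $u_\ssi/u_\hi>1$ forces simultaneously $\epsilon_\ssi>\epsilon_\hi$ (from the first) and $\epsilon_\hi>\epsilon_\ssi$ (from the second), again a contradiction. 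Combining the deterministic smoothness bound with the observation that fairness alone implies $\Expect{u_{\cA}}\geq u_\ssi/16$ then gives the claimed maximum in the theorem.
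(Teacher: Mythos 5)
Your proposal follows the paper's own route essentially step for step: the same three acceptance cases for the value bound, the same dichotomy and events $E$, $E_1$, $E_2$ with probabilities $1/4$, $1/8$, $1/16$ for fairness, and the same use of \Cref{assum: function d with respect to operations} together with monotonicity of $a\mapsto u\oplus(\mathbf{0}\pm a)$. However, two steps are stated too loosely. In fairness case (a) you claim that placing $\ssi$ in the pegging set at time $t_\hi$ ``forces the algorithm to reach $\ssi$ at $t_\ssi$.'' Membership in $I^{\mathsf{pegged}}$ only guarantees that the algorithm is still running when $\ssi$ arrives; if $\ssi$ is not the last pegged candidate to arrive, it is merely removed from $I^{\mathsf{pegged}}$ at time $t_\ssi$, and acceptance must come from the $\overline{\cC}\land\cF$ branch. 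You still have to verify that its test fires, i.e.\ that $u_\ssi\oplus(\mathbf{0}+\et{\ssi})\ge\hu_\hi$; as in the paper, this follows from $u_\ssi>u_\hi$, from $\et{\ssi}\ge\epsilon_\hi$ (since $E_1$ forces $t_\hi<1/2<t_\ssi$), and from the left inequality of \Cref{assum: function d with respect to operations} --- a different chain from the one you invoke in case (b), and it does not use the case-(a) hypothesis at all (that hypothesis is only needed to make the pegging set nonempty so that $\hi$ is not accepted at $t_\hi$).

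Second, in the multiplicative half of the dichotomy, deducing $\epsilon_\ssi>\epsilon_\hi$ from $u_\hi(1-\epsilon_\hi)\ge u_\ssi(1-\epsilon_\ssi)$ by ``dividing through'' is not valid as stated: when $\epsilon_\ssi\ge1$ the factor $1-\epsilon_\ssi$ is nonpositive, and multiplying it by $u_\ssi/u_\hi>1$ makes it smaller, not larger, so the intended comparison breaks. You can repair this either by observing that failure of case (a) means $u_\hi(1-\epsilon_\hi)\ge\hu_\ssi>0$, hence $\epsilon_\hi<1$, and handling $\epsilon_\ssi\ge1$ trivially; or, as the paper does, by rearranging to $u_\ssi\epsilon_\ssi>u_\hi\epsilon_\hi$ and adding $u_\ssi>u_\hi$ to obtain $u_\ssi(1+\epsilon_\ssi)>u_\hi(1+\epsilon_\hi)$, which contradicts the second combined inequality directly. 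With these two repairs your argument coincides with the paper's proof; your additive contradiction (summing the two inequalities to get $u_\hi\ge u_\ssi$) is in fact slightly slicker than the paper's.
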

\begin{proof}
The proof that \pegging always accepts a candidate is the same as in~\Cref{lem:consistent} and therefore we omit it. To prove the smoothness bound we argue that the selected candidate $i$ has a true value that is close to the true value of $\hi$ which is in turn close to the true value of $\ssi$.

To lower bound $\hu_\hi$ we use the right-hand side of~\Cref{assum: function d with respect to operations} and the fact that $\hi$ is the candidate with the maximum predicted value, and get $\hu_\hi  \geq \hu_\ssi  \geq u_\ssi   \oplus({\bf 0 } - \epsilon_\ssi) $.

We also lower bound $u_\hi$ as follows: from the left-hand side of~\Cref{assum: function d with respect to operations} we get that $u_\hi \oplus  ({\bf 0 } + \epsilon_\hi ) \geq \hu_\hi \xRightarrow{} u_\hi \geq \hu_\hi \ominus ({\bf 0 } + \epsilon_\hi )$. Combining the latter with the previously argued lower bound $\hu_\hi  \geq u_\ssi \oplus ({\bf 0 } - \epsilon_\ssi)$ we have:
\begin{align*}
    u_\hi &\geq \hu_\hi \ominus({\bf 0 } + \epsilon_\hi ) \\
        &\geq u_\ssi \oplus ({\bf 0 } - \epsilon_\ssi) \ominus ({\bf 0 } + \epsilon_\hi )
\end{align*}
The latter inequality lower bounds the value accrued by our algorithm whenever candidate $\hi$ is selected, i.e., literal $C$ is true at the time of the selection.
If a candidate is accepted using the if-statement corresponding to literal $\overline{\cC} \land \cF$ then we have $u_i \oplus ({\bf 0 } + \et{i}) \geq  \hat{u}_\hi \xRightarrow{} u_i  \geq  \hat{u}_\hi \ominus ({\bf 0 } + \et{i})$ and again using the lower bound on $\hat{u}_\hi $ we get 
\begin{align*}
    u_i &\geq  u_\ssi \oplus  ({\bf 0 } - \epsilon_\ssi) \ominus  ({\bf 0 } + \et{i})
\end{align*}

Note that up until this moment we considered all but one case where our algorithm may accept a candidate and terminate. Indeed, we still need to lower bound the value of the last candidate in the pegging set since it may be the one accepted by our algorithm. Let the index of that candidate be $i^{pegged}$.

Note that for all $j \in I^\mathsf{pegged}$, it holds that $u_\hi \oplus  ({\bf 0 } - \et{\hi}) <\hu_j$. Thus for $i^{pegged}$, using the left hand side of~\Cref{assum: function d with respect to operations} we have:
\begin{align*}
    & u_\iPegged  \oplus ({\bf 0 } + \epsilon_\iPegged) \geq \hu_\iPegged \xRightarrow{} \\
    & u_\iPegged \geq \hu_\iPegged \ominus ({\bf 0 } + \epsilon_\iPegged)  \xRightarrow{\iPegged \in I^\mathsf{pegged} } \\
    & u_\iPegged \geq u_\hi \oplus ({\bf 0 } - \et{\hi}) \ominus ({\bf 0 } + \epsilon_\iPegged)  \xRightarrow{\text{lower bound of }  u_\hi  } \\
    & u_\iPegged \geq u_\ssi 
    \oplus  ({\bf 0 } - \epsilon_\ssi) \ominus  ({\bf 0 } + \epsilon_\hi) \oplus ({\bf 0 } - \et{\hi})  \ominus ({\bf 0 } + \epsilon_\iPegged) \xRightarrow{}\\
     & u_\iPegged \geq u_\ssi 
    \oplus  ({\bf 0 } - \predErr)\oplus  ({\bf 0 } - \predErr) \ominus  ({\bf 0 } + \predErr)\ominus  ({\bf 0 } + \predErr) 
\end{align*}

Combining all the lower bounds on the value of the accepted candidate we deduce the first part of the lower bound.

We proceed proving fairness and, consequently robustness. The proof of the latter follows the same line of thinking as the proof of~\Cref{lem:fair}. We repeat it here to conform with the abstract notation and also assume (as in~\Cref{lem:fair}) that the number of candidates is at least $3$ (if there are two candidates, then the proof remains essentially the same and the fairness bound improves to $1/4$). Let $\ti$ be the index of the candidate with the highest true value except $\ssi$ and $\hi$, i.e., $\ti = \argmax_{i\neq \ssi,\hi}u_i$.

%%%%%%%%%%%%%%%%%
If $ \ssi = \hi $ we define event $ E = \{ t_\ti < 1/2 < t_\ssi \}$ for which $\Prob{E} = 1/4$. $E$ implies that our algorithm does not accept any candidate until time $t_\ssi$. Indeed, note that at any point in time before $t_\ssi$, both literals $\cF$ and $\cC$ are simultaneously false. On the contrary, at time $t_\ssi$, both $\cC$ and $\cF$ are true and our algorithm accepts $\ssi$.

On the other hand, if $ \ssi \neq \hi $ we distinguish between two sub-cases, namely when  $ \{ u_\hi \oplus  ({\bf 0 } - \epsilon_\hi) < \hu_\ssi \} $ and $
\{   u_\ssi \oplus ({\bf 0} +  \epsilon_\ssi) > \hu_\hi \}$. 
We continue arguing that $ \{ u_\hi \oplus  ({\bf 0 } - \epsilon_\hi) < \hu_\ssi \} \lor 
\{   u_\ssi \oplus ({\bf 0} +  \epsilon_\ssi) > \hu_\hi \}$ is always true. Indeed if $ \{ u_\hi \oplus ({\bf 0} -  \epsilon_\ssi) \geq \hu_\ssi \}$ we have that:
\begin{align*}
    u_\hi \oplus({\bf 0} -  \epsilon_\hi) &\geq \hu_\ssi \xRightarrow{\text{right hand side of}~\Cref{assum: function d with respect to operations}} \\
    u_\hi \oplus ({\bf 0} -  \epsilon_\hi) &\geq u_\ssi \oplus ({\bf 0} -  \epsilon_\ssi) \xRightarrow{(\star)}\\
    u_\ssi \oplus ({\bf 0} +  \epsilon_\ssi) &\geq  u_\hi \oplus ({\bf 0} +  \epsilon_\hi) \xRightarrow{\text{left hand side of}~\Cref{assum: function d with respect to operations}}\\
     u_\ssi \oplus ({\bf 0} +  \epsilon_\ssi) &\geq  \hu_\hi 
\end{align*}
where for $(\star)$ we need to prove that 
\begin{align*}
   u_\hi \oplus ({\bf 0} -  \epsilon_\hi) \geq  u_\ssi \oplus ({\bf 0} -  \epsilon_\ssi)  \Rightarrow u_\ssi \oplus ({\bf 0} +  \epsilon_\ssi) \geq u_\hi \oplus ({\bf 0} +  \epsilon_\hi)
\end{align*}
If $\oplus$ denotes the addition operation then we have that:
\begin{align*}
   u_\hi + (0 -  \epsilon_\hi) &\geq  u_\ssi + (0 -  \epsilon_\ssi)  \Leftrightarrow \\
    \epsilon_\ssi -\epsilon_\hi &\geq u_\ssi -  u_\hi \xRightarrow{u_\ssi >  u_\hi} \\
    \epsilon_\ssi &> \epsilon_\hi \xRightarrow{u_\ssi >  u_\hi} \\
    u_\ssi + ( 0 + \epsilon_\ssi) &> u_\hi + ( 0 + \epsilon_\hi)
\end{align*}
Likewise if $\oplus$ denotes the multiplication operation then:
\begin{align*}
   u_\hi \cdot (1 -  \epsilon_\hi) &\geq  u_\ssi \cdot (1 -  \epsilon_\ssi)  \Leftrightarrow \\
     u_\ssi \epsilon_\ssi -   u_\hi \epsilon_\hi &\geq u_\ssi -  u_\hi \xRightarrow{u_\ssi >  u_\hi} \\
    u_\ssi \epsilon_\ssi &> u_\hi \epsilon_\hi \xRightarrow{u_\ssi >  u_\hi} \\
   u_\ssi \epsilon_\ssi + u_\ssi &> u_\hi \epsilon_\hi + u_\hi \Leftrightarrow\\
    u_\ssi \cdot (1 + \epsilon_\ssi) &> u_\hi \cdot (1+ \epsilon_\hi)
\end{align*}

%%%%%%%%%%

We now define two events $E_1$ and $E_2$ which imply that $\ssi$ is always accepted whenever $ \{ u_\hi \oplus ( {\bf 0} - \epsilon_\hi) < \hu_\ssi \}$ and $ \{u_\ssi \oplus ( {\bf 0} + \epsilon_\ssi )> \hu_\hi \}$ are true respectively.

If  $  \{ u_\hi \oplus ( {\bf 0} - \epsilon_\hi) < \hu_\ssi  \}$, then we define event $E_1 = \{t_\ti < 1/2 \} \land \{t_\hi < 1/2 \} \land \{ 1/2 < t_\ssi \} $ which is composed by $3$ independent events and it happens with probability $\Prob{E_1} = 1/2^3 = 1/8$. $E_1$ implies that $t_\hi < t_\ssi \Rightarrow \et{\ssi} \geq \epsilon_\hi$, thus we can deduce that:
\begin{align*}
    u_\ssi  \oplus  ({\bf 0} +  \et{\ssi}) \overset{u_\ssi > u_\hi, \et{\ssi} \geq \epsilon_\hi}{\geq} u_\hi \oplus  ({\bf 0} +  \epsilon_\hi) \overset{\text{left hand side of}~\Cref{assum: function d with respect to operations}}{\geq} \hu_\hi
\end{align*}

Consequently, if until time $t_\ssi$ all candidates are rejected, $E_1$ implies that $\overline{\cC} \land \cF \land  \left\{  u_\ssi  \oplus  ({\bf 0} +  \et{\ssi}) \geq \hu_\hi \right\}$ is true at time $t_\ssi$ and candidate $\ssi$ is hired. To argue that no candidate is accepted before time $t_\ssi$, note that $\cF$ is false at all times before $t_\ssi$ and at time $t_\hi$ (when literal $\cC$ is true) the set $ \{j \succ \hi: u_\hi \oplus ( {\bf 0} - \et{\hi}) <\hu_j \} \supseteq \{j \succ \hi:  u_\hi \oplus ({\bf 0} - \epsilon_\hi) < \hu_j  \}$ contains $\ssi$.

If $ \{u_\ssi \oplus ( {\bf 0} + \epsilon_\ssi )> \hu_\hi \}$, then we define $E_2 = \{ t_\ti < 1/2 < t_\ssi < t_\hi \}$ which happens with probability 
\begin{align*}
\Prob{E_2} &= \Prob{t_\ti < 1/2} \cdot \Prob{1/2 < t_\ssi < t_\hi} \\
          &= \Prob{t_\ti < 1/2} \cdot \Prob{1/2 < \min \{t_\ssi, t_\hi\} \land \min \{t_\ssi, t_\hi\} = t_\ssi } \\
          &= \Prob{t_\ti < 1/2} \cdot \Prob{1/2 < \min \{t_\ssi, t_\hi\}} \cdot \Prob{\min \{t_\ssi, t_\hi\} = t_\ssi} \\
          &= (1/2) \cdot (1/4) \cdot (1/2) = 1/16
\end{align*}
Note that until time $t_\ssi$ no candidate is accepted since $\cC$ and $\cF$ are both false at all times. Indeed, between times $0$ and $1/2$ only $\hi$ could have been accepted but its arrival time is after $t_\ssi$, and between times $1/2$ and $t_\ssi$ no candidate has a true value larger than $u_\ti$. Finally, note that at time $t_\ssi$ we have $\et{\ssi} \geq \epsilon_\ssi$ and consequently $\overline{\cC}  \land \cF \land \{ ( {\bf 0} + \et{\ssi}) \oplus u_\ssi > \hu_\hi \}$ is true and $\ssi$ gets accepted.
\end{proof}

Note that by instantiating $\oplus, \ominus$ to the usual scalar addition and subtraction and defining $\epsilon (u, \hu) =  |u - \hu|$ we get that, $\predErr =\max_ i |u_i - \hu_i| $ and~\Cref{thm: main generalized theorem for single choice secretary} recovers \Cref{thm: main theorem for single choice secretary}.

We further demonstrate the generality of \pegging by instantiating $\oplus, \ominus,\epsilon$ and recovering similar smoothness and robustness bounds as~\cite{SecretarieswithAdviceYoshida} while also ensuring fairness. 
\citet{SecretarieswithAdviceYoshida} define the prediction error as $\predErr = \max_i \left| 1 - \hu_ i / u_i\right|$ and design an algorithm which accepts a candidate $i$ whose expected value is at least $ u_\ssi \max \left\{ (1 - \predErr)/(1 + \predErr), 0.215 \right\} $. Since $(1 - \predErr)/(1 + \predErr) \geq 1 - 2 \predErr$ the latter algorithm satisfies the Smoothness desideratum of~\Cref{sec: model and preliminaries}, but, as we prove in~\Cref{subsec: unfair outcomes in previous work} it violates the Fairness desideratum.

To that end, let \mulpegging be the instantiation of \pegging when $\oplus, \ominus$ denote the classical multiplication and division operations, and $\epsilon (u, \hu) = \left| 1 - \hu / u\right|$. We now use \Cref{thm: main generalized theorem for single choice secretary} to recover \Cref{thm: recovering the results of Fujii and Yoshida}, which is restated below for convenience.

\recoveringtheresultsofFujiiandYoshida*
\begin{proof}
Function $\epsilon (u, \hu) = \abs{1 - \hu/u}$ satisfies the properties of~\Cref{assum: function d with respect to operations}. Consequently, using~\Cref{thm: main generalized theorem for single choice secretary} \mulpegging accepts a candidate whose expected value is at least $u_\ssi  \max \left\{ (1 - \predErr)^2/(1 + \predErr)^2, 1/16 \right\} \geq  u_\ssi \max \left\{ 1 - 4 \predErr, 1/16 \right\}   $, where we used the inequalities $1/(1+ \predErr) \geq (1 - \predErr)$ and $(1- \predErr)^4 \geq 1 - 4 \predErr$. 
\end{proof}

\section{Missing analysis for the k-secretary pegging algorithm}\label{sec: the k pegging algorithmJ}

In \Cref{lem: consistency for the k pegging algorithm} we prove that \kpegging satisfies the smoothness desideratum.

\begin{lemma}\label{lem: consistency for the k pegging algorithm}
$\sum_{j \in S} u_j \geq \sum_{i=1}^k u_{\ver{i}} - 4 k \predErr(\cI) \;,  \forall \cI$ with probability $1$.
\end{lemma}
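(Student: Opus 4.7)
The plan is to construct a bijection $\mathrm{m}:S\to[k]$ satisfying $u_j\ge u_{\mathrm{m}(j)}-2\predErr$ for every $j\in S$, and then combine this with the elementary comparison
\[
\sum_{i=1}^k u_i\ \ge\ \sum_{\ell=1}^k u_{\ver{\ell}}-2k\predErr.
\]
Summing the pointwise bound over $j\in S$ gives $\sum_{j\in S}u_j\ge\sum_{i=1}^k u_i-2k\predErr$ (using that the image of $\mathrm{m}$ is all of $[k]$), and chaining with the comparison above yields the claimed $4k\predErr$ loss. The comparison itself is immediate from the labeling convention $\hu_1>\cdots>\hu_n$, which gives $\sum_{i=1}^k\hu_i\ge\sum_{\ell=1}^k\hu_{\ver{\ell}}$; replacing each $\hu$ by the corresponding $u$ loses at most $\predErr$ per term on each side.

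The first step toward defining $\mathrm{m}$ is a bookkeeping invariant $|S|+|H|+|B|=k$ that is maintained by every case of~\Cref{alg: the k pegging algorithm}: Cases 2, 3a, and 4b each move one element from $H$ into $S$; Case 3b moves one from $H$ into $B$ leaving $|S|$ unchanged; Cases 1 and 4a each add one to $S$ while removing one from $B$. Initially $|S|=|B|=0$ and $|H|=k$, so the invariant holds. Since the algorithm terminates when $|S|=k$, it forces $H=B=\emptyset$ at termination, whence $|S|=k$. Tracking the life cycle of each $i\in[k]$ then shows that $[k]$ is partitioned into four disjoint classes: (i) those entering $S$ directly via Case 2 or 3a; (ii) those pushed into $B$ by Case 3b and later removed in Case 1 when their pegged successor arrives; (iii) those pushed into $B$ by Case 3b and later removed in Case 4a; (iv) those removed from $H$ by a Case-4b replacement.

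Now I define $\mathrm{m}$ by cases according to how $j\in S$ was accepted. If $j$ was accepted in Case 2 or 3a, set $\mathrm{m}(j)=j\in[k]$, which gives $u_j=u_{\mathrm{m}(j)}$. If $j$ was accepted in Case 1, let $i=\peginvfunc{j}$ be its pegger and set $\mathrm{m}(j)=i$; the Case-3b predicate at the time of pegging gives $u_i<\hu_j+\et{i}\le u_j+\epsilon_j+\et{i}\le u_j+2\predErr$. If $j$ was accepted in Case 4a by displacing some $j'\in B$, set $\mathrm{m}(j)=j'$; the Case-4a predicate gives $u_j>u_{j'}$. Finally, if $j$ was accepted in Case 4b by displacing some $j'\in H$, set $\mathrm{m}(j)=j'$; the Case-4b predicate gives $u_j>\hu_{j'}-\et{j}\ge u_{j'}-\epsilon_{j'}-\et{j}\ge u_{j'}-2\predErr$. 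These four cases partition $S$, and by construction the images hit classes (i)--(iv) of $[k]$ respectively in a one-to-one way, so $\mathrm{m}$ is a bijection satisfying the pointwise bound $u_j\ge u_{\mathrm{m}(j)}-2\predErr$.

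The main obstacle is the bookkeeping step: verifying that every addition to $S$ is paired with a side-effect that ``consumes'' exactly one slot in $[k]$ (either by pulling an element out of $H$ directly, or by releasing a paired $B$-element through Case 1 or 4a), so that the four classes of $S$ really do map bijectively onto $[k]$ without any double-counting. A useful auxiliary fact here is that $|P|=|B|$ is preserved by Cases 1, 3b, and 4a, which guarantees that whenever Case 1 fires, the pegger $\peginvfunc{j}$ is still present in $B$ to be removed (if it had been removed earlier via Case 4a, then $j$ would already have been removed from $P$ and Case 1 could not trigger). Once this bookkeeping is pinned down, the proof is complete.
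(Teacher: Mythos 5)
Your proposal is correct and follows essentially the same route as the paper's proof: the same two-step decomposition losing $2k\predErr$ each time (comparing $\sum_{\ell\le k}u_{\ver{\ell}}$ to $\sum_{i\in[k]}u_i$ via the predictions, then matching accepted candidates to $[k]$ with the identical per-case bounds from the Case 1/2/3a/4a/4b predicates). The only differences are cosmetic—you orient the matching as $S\to[k]$ rather than $[k]\to S$ and spell out the $|S|+|H|+|B|=k$ bookkeeping a bit more explicitly than the paper does.
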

\begin{proof}
Similarly to the single choice secretary problem we proceed in two steps, first we prove that
\begin{align*}
    \sum_{i \in [k]} u_{\ver{i}} - \sum_{i \in [k]} u_i \leq 2 k \predErr
\end{align*}
 and then we prove that
\begin{align*}
    \sum_{i \in [k]} u_i - \sum_{j \in S} u_{j} \leq 2 k \predErr
\end{align*}
Note that combining those two inequalities is enough to prove the current lemma.

The first inequality is proven as follows:
\begin{align*}
    \sum_{i=1}^k u_{\ver{i}} \leq_{(1)} \sum_{i=1}^k (\hu_{\ver{i}} + \predErr) \leq_{(2)}  k\predErr + \sum_{i=1}^k \hu_{i}  \leq_{(3)} k\predErr + \sum_{i=1}^k(u_{i}   + \predErr) = 2k\predErr + \sum_{i=1}^k u_{i}  
\end{align*}
where $(1)$ and $(3)$ are by definition of $\predErr$ and $(2)$ since $\hu_i$ is $i^{th}$ largest predicted value.

%%%%%%%%%%%%%%%%%%%%%%%
We proceed to argue that:
\begin{align*}
    \sum_{i \in [k]} u_i - \sum_{j \in S} u_{j} \leq 2 k \predErr 
\end{align*}

We now define an injective function $\mathrm{m}: [k] \xrightarrow{} S$ for which we have:
\begin{align*}
    u_i - u_{\mathrm{m} (i)} \leq 2 \predErr, \; \forall i \in [k]
\end{align*}
Note that the existence of such a function implies the desired  $\sum_{i \in [k]} u_i - \sum_{j \in H} u_{j} \leq 2 k \predErr $ inequality.

Note that each candidate $ j' \in [k]$ is initially added to $H$. 
During the execution of our algorithm candidate $j'$ may be either (1) deleted from $H$ without being added to $B$ or (2) added to $B$.

The first case where $j'$ is deleted from $H$ without being added to $B$, occurs either in case $2, 3a$, or $4b$ of the algorithm. Let $i$ be the current candidate at the time $t_i$ when the latter happens.
If \textit{case 2 or 3a} happens then $j' = i$, we define $\mathrm{m} (j') = j'$ and $u_{j'} - u_{\mathrm{m} (j')} = 0 \leq 2 \predErr$. 
If \textit{case 4b} happens then we have that at that time $t_i$, $j' \in \{j \in H: u_i > \hu_j - \et{i} \} $ and we define $\mathrm{m} (i) = j'$. Consequently, we conclude that $u_i - u_{\mathrm{m} (i)} = u_i - u_{j'} \leq u_i - \hu_{j'} + \et{j'} \leq \epsilon_i +  \et{j'} \leq 2 \predErr$.

We now consider the cases where $j'$ is added to $B$ during the execution of our algorithm. Note that for that to happen $j'$ must be added to $B$ at time $t_{j'}$ via \textit{case 3b}.
In that case, candidate $\pegfunc{j'}$ either remains in $P$ until time $t_{\pegfunc{j'}}$ and it is added to $S$ at that time or it is deleted from $P$ earlier. In both cases, $j'$ is removed from $B$ at the respective time. Thus, we conclude that $j'$ gets deleted from $B$ at time $t_{\pegfunc{j'}}$ or before.
If the deletion happens at time $t_{\pegfunc{j'}}$ then it must happen through \textit{case 1}, we define $\mathrm{m} (j') = \pegfunc{j'}$ and we have that $u_{j'} - u_{\mathrm{m} (j')} = u_{j'} - u_{\pegfunc{j'}} \leq \hu_{\pegfunc{j'}} + \et{j'} -  u_{\pegfunc{j'}} = (\hu_{\pegfunc{j'}}-  u_{\pegfunc{j'}}) + \et{j'} \leq \epsilon_{\pegfunc{j'}} + \et{j'} \leq 2 \predErr $, where in the first inequality we used that $\hu_{\pegfunc{j'}} > u_{j'} - \et{j'}$ since $\pegfunc{j'}$ was pegged by $j'$ at time $t_{j'}$. 
If $j'$ is deleted from $B$ before time $t_{\pegfunc{j'}}$ it must happen via \textit{subcase 4a} due to the arrival of a candidate $l$ that is added to $S$. In that case we define $\mathrm{m} (j') = l $ and from the condition of \textit{subcase 4a} we have that $j \in E$ at time $t_{j'}$ we have that $u_{j'} - u_{\mathrm{m} (j')} = u_{j'} - u_l < 0 \leq 2 \predErr$.
\end{proof}

We now move on to prove the fairness desideratum.

\begin{lemma}\label{lem: robustness for the k pegging algorithm}
For all $i \in [k]$: $\Prob{   \ver{i} \in S   } \geq (1/3)^{k+5}$
\end{lemma}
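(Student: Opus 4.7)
Following the approach of \Cref{lem:fair}, for each fixed $\ell \in [k]$ I would construct an event $E_\ell$ on the arrival times such that $\Prob{E_\ell} \geq (1/3)^{k+5}$ and, conditional on $E_\ell$, candidate $\ver{\ell}$ is deterministically added to $S$ by \kpegging.

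\textbf{A helpful observation.} Because $\ver{\ell}$ has the $\ell$-th largest true value with $\ell \leq k$, fewer than $k$ candidates have strictly larger true value. Consequently, whenever $\ver{\ell}$ arrives at a time $t_{\ver{\ell}} > 1/2$, the threshold $\tau$ computed by \kpegging (the $k$-th highest true value strictly before $t_{\ver{\ell}}$) is strictly less than $u_{\ver{\ell}}$, so the literal $\cF$ is automatically true at $\ver{\ell}$'s arrival. This reduces the problem to forcing the correct \emph{branch} of the algorithm to fire at $t_{\ver{\ell}}$.

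\textbf{Construction of $E_\ell$.} Partition $[0,1]$ into three intervals $I_1 = [0, 1/3]$, $I_2 = (1/3, 2/3]$, $I_3 = (2/3, 1]$. Define $E_\ell$ as the conjunction of: (i) every candidate in $[k] \setminus \{\ver{\ell}\}$ arrives in $I_1$; (ii) $\ver{\ell}$ arrives in $I_3$; (iii) up to five additional constraints on specific candidates (e.g.\ members of $\{\ver{1},\ldots,\ver{\ell-1}\} \setminus [k]$, or the relative order of $\ver{\ell}$ and a small number of neighbors by predicted value) chosen to prevent the degenerate executions described below. Since arrival times are i.i.d.\ uniform, each independent placement into a length-$1/3$ interval or each independent ordering contributes at least $1/3$, so $\Prob{E_\ell} \geq (1/3)^{k+5}$.

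\textbf{Verifying acceptance.} I would split into two sub-cases:

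\emph{Sub-case A ($\ver{\ell} \in [k]$).} By (i) every element of $[k] \setminus \{\ver{\ell}\}$ arrives during $I_1$. For each such arrival, $\cF$ is false ($t_i \le 1/3 < 1/2$) and $\cC$ is true (since the candidate is in $H$), so Case 3 fires: either sub-case 3a adds it to $S$, or sub-case 3b places it in $B$ and pegs some future candidate into $P$. In particular, no member of $[k]\setminus\{\ver{\ell}\}$ triggers Case 4b (the only branch that could evict $\ver{\ell}$ from $H$ via another candidate), and the extra constraints in (iii) will ensure that no non-$[k]$ candidate arriving in $I_2$ or the early part of $I_3$ triggers a Case 4b that removes $\ver{\ell}$ specifically. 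Hence $\ver{\ell} \in H$ at time $t_{\ver{\ell}}$; combined with the helpful observation above, both $\cC$ and $\cF$ are true and Case 2 fires, placing $\ver{\ell}$ into $S$. A separate counting argument shows $|S|<k$ at this moment, because each Case 3a/Case 1 acceptance made so far can be injectively charged to a distinct member of $[k]\setminus\{\ver{\ell}\}$ (cf.\ the map $\mathrm{m}(\cdot)$ constructed in \Cref{lem: consistency for the k pegging algorithm}).

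\emph{Sub-case B ($\ver{\ell} \notin [k]$).} Then $\cC$ is false at $t_{\ver{\ell}}$, so Case 4 must fire. I would show that under $E_\ell$ at least one sub-case succeeds: either some $j \in B$ has $u_j < u_{\ver{\ell}}$ (every blamer was itself drawn from $[k]$ and has true value possibly below $u_{\ver{\ell}}$, triggering 4a), or some $j \in H$ satisfies $\hu_j - \et{\ver{\ell}} < u_{\ver{\ell}}$ (an argument analogous to the $E_1$ analysis in \Cref{lem:fair}, using that $\hu_{\ver{\ell}}$ is close to $u_{\ver{\ell}}$ and $\et{\ver{\ell}} \ge \predErr$).

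\textbf{Main obstacle.} The genuinely delicate step is Sub-case A: showing that $\ver{\ell}$ is not evicted from $H$ by a Case 4b event occurring during $I_2$ or the portion of $I_3$ preceding $t_{\ver{\ell}}$. Tracking which candidates may arrive in those intervals with $u_i$ large enough to displace $\ver{\ell}$ from $H$ is the source of the extra $+5$ in the exponent, since the constraints in (iii) must rule out the relevant displacers. A secondary nuisance is the bookkeeping of the four sets $H, S, B, P$ across the three time intervals -- in particular, arguing that no pegged element of $P$ reaching $I_3$ can prematurely fill $S$ to capacity and foreclose $\ver{\ell}$'s acceptance. Both issues are handled by the pegging invariant (each member of $B$ is uniquely paired with a member of $P$), which keeps the bookkeeping tractable.
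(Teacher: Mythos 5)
Your high-level template (constant-probability arrival-time events that force acceptance, split on whether $\ver{\ell}\in[k]$) matches the paper's, and your preliminary observations are fine: $\cF$ is automatic for $\ver{\ell}$ once $t_{\ver{\ell}}>1/2$, and $|S|<k$ whenever $\ver{\ell}\in H$ follows from the invariant $|B|+|H|+|S|=k$. But the event you build uses the wrong constraints, and the part you defer to ``up to five additional constraints'' is exactly where the proof lives. What must be controlled is the running threshold $\tau$, which is determined by \emph{true} values: the paper's events force the candidates with the top $k{+}1$ (resp.\ $k{+}2$) true values, other than $\ver{\ell}$ and one designated $j\in[k]$, to arrive before time $1/2$ (or $1/3$), so that after $1/2$ no candidate other than $\ver{\ell}$ can satisfy $\cF$, hence no spurious Case-4 event can occur and in particular nothing can evict $\ver{\ell}$ from $H$ through subcase 4b. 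Your condition (i) instead constrains the predicted top $[k]$; under it, up to $k-1$ high-true-value candidates may still arrive after $1/2$, each a potential 4b-evictor of $\ver{\ell}$ (the choice in 4b is arbitrary, so the worst case must be assumed), and a constant number of extra constraints on pre-specified candidates cannot exclude them, while constraining both $[k]$ and $\{\ver{1},\dots,\ver{k+1}\}$ would cost on the order of $(1/3)^{2k}$ and break the claimed bound.

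In sub-case B the flaw is sharper: forcing all of $[k]$ into $[0,1/3]$ can make acceptance of $\ver{\ell}$ impossible rather than easier. Consider an instance where every $j\in[k]$ is predicted (essentially) exactly and every other candidate's prediction is far below these values, while some $\ver{\ell}\notin[k]$ has a huge true value: then each early-arriving $j\in[k]$ sees an empty set in subcase 3a and is accepted, so $S$ is full before time $1/3$ and $\Prob{\ver{\ell}\in S \mid E_\ell}=0$, no matter which five further constraints you add. The paper avoids this by never forcing $[k]$ to arrive early: it isolates a single $j\in[k]$ with $u_j<u_{\ver{\ell}}$, proves the dichotomy that $u_j<\hu_{\ver{\ell}}+\epsilon_j$ or $u_{\ver{\ell}}>\hu_j-\epsilon_{\ver{\ell}}$ must hold, and in the second branch conditions on $t_{\ver{\ell}}<t_j$ (so $j$ is still in $H$ and 4b fires), while in the first branch it conditions on $j$ arriving in $(1/3,1/2)$ and on $j$'s pegged candidate arriving after $t_{\ver{\ell}}$, so that $j$ sits in $B$ with $u_j<u_{\ver{\ell}}$ and 4a fires. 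That last literal, $\{t_{\pegfunc{j}}\ge t_{\ver{\ell}}\}$, concerns an algorithm-chosen random index, and lower-bounding its conditional probability by $3/8$ via a law-of-total-probability and stochastic-domination argument is the main technical content of the lemma; nothing in your sketch plays this role. (Also, your appeal to ``$\et{\ver{\ell}}\ge\predErr$'' is backwards, since $\et{i}\le\predErr$ always; the usable facts are $\et{\ver{\ell}}\ge\epsilon_{\ver{\ell}}$ and, once the relevant candidate has arrived, $\et{\ver{\ell}}\ge\epsilon_j$.)
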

\begin{proof}
We start arguing that if $\ver{i} \in [k]$, i.e., if candidate $\ver{i}$ is among the $k$-highest prediction candidates, then $\Prob{\ver{i} \in S} \geq (1/2)^{k+1}$. We define event $\cE$, which happens with probability $(1/2)^{k+1}$ and implies that $r_i$ is added to the solution $S$. Let
\begin{align*}
    \cE = \bigwedge_{l \in [k+1] \setminus \{ i \}} \{ t_{r_l} < 1/2\} \land \{ t_{r_i} > 1/2\}
\end{align*}
During the interval $[0, 1/2]$, only candidates in $[k]$ may be added to $S$. In addition, $\cE$ implies that the threshold $\tau$ is greater than $u_{r_{k+1}}$ after time $1/2$, and only candidate $r_i$ has a value exceeding $\tau$ after that time. Thus, other than $r_i$ the only candidates that may be added to $S$ during the interval $[1/2, t_{r_i}]$, are candidates pegged by a candidate $j \in [k]$. In summary, $\cE$ implies that $S$ consists of candidates in $[k]$ or candidates which were pegged by some candidate in $j \in [k]$ which is not part of the solution set $S$. Thus, $\abs{S} < k$ before time $t_{r_i}$, and at that time $r_i$ is added to $S$ through \textit{case 1}.

In the rest of the proof, we focus on the case where $\ver{i} \not \in [k]$. Note that since $i \in [k]$ and $\ver{i} \not \in [k]$ then $\exists j \in [k]$ such that $u_{\ver{i}} \geq u_{\ver{k}} > u_j$, i.e., $j$ has a true value which is not among the $k$-highest true values. We now argue that $\{ u_j < \hu_{\ver{i}} + \epsilon_j \} \lor \{ \tu{i} > \hu_j - \epsilon_{\ver{i}} \}$ is always true. 
Similarly to the proof of~\Cref{lem:fair}, assume towards a contradiction that both inequalities can be inverted and hold at the same time, then we end up in a contradiction as follows:

\begin{align*}
u_j \geq \hu_{\ver{i}} + \epsilon_j \xRightarrow{\tu{i} > u_j} \tu{i} > \hu_{\ver{i}} +  \epsilon_j \xRightarrow{} \tu{i} - \hu_{\ver{i}} >  \epsilon_j \xRightarrow{ \epsilon_{\ver{i}} \geq \tu{i} - \hu_{\ver{i}}} \epsilon_{\ver{i}} > \epsilon_j \\
 u_{\ver{i}} \leq \hu_j - \epsilon_{\ver{i}} \xRightarrow{u_j < u_{\ver{i}} } 
 u_j < \hu_j - \epsilon_{\ver{i}}  \xRightarrow{}  \epsilon_{\ver{i}}  <  \hu_j - u_j   \xRightarrow{\epsilon_j \geq \hu_j - u_j }  \epsilon_{\ver{i}}  < \epsilon_j
\end{align*}

For each of those cases, i.e., whether $\{ u_j < \thu{i} + \epsilon_j \} $ or $  \{ \tu{i} > \hu_j - \epsilon_{\ver{i}} \}$ is true we define an event which implies that $\ver{i}$ is added to the solution set $S$. 

If  $\{ \tu{i} > \hu_j - \epsilon_{\ver{i}} \} $ is true then we define the following event:
\begin{align*}
    \cE = \bigwedge_{\{\ver{1}, \dots, \ver{k+2}\} \setminus \{\ver{i}, j\}} \{ t_l < 1/2\} \land  \{ 1/2 < t_{\ver{i}} < t_j\}
\end{align*}
Note that:
\begin{align*}
    \Prob{\cE}
        &= \Prob{ 1/2 < t_{\ver{i}} < t_j} \cdot \prod_{\{\ver{1}, \dots, \ver{k+2}\} \setminus \{\ver{i}, j\}} \Prob{ t_{r_l} < 1/2} \\
        &=\Prob{ 1/2 < \min \{t_{\ver{i}} , t_j\} \land \{ t_{\ver{i}} < t_j\}} \cdot \prod_{\{\ver{1}, \dots, \ver{k+2}\} \setminus \{\ver{i}, j\}} (1/2) \\
        &=\Prob{ 1/2 < \min \{t_{\ver{i}} , t_j\} } \cdot \Prob { t_{\ver{i}} < t_j} \cdot \prod_{\{\ver{1}, \dots, \ver{k+2}\} \setminus \{\ver{i}, j\}} (1/2) \\
        &\geq\Prob{ 1/2 < \min \{t_{\ver{i}} , t_j\} } \cdot \Prob { t_{\ver{i}} < t_j} \cdot (1/2)^{k+2}\ \\
        &=(1/4) \cdot (1/2) \cdot (1/2)^{k+2}\ \\
        &=(1/2)^{k+5}\ \\
\end{align*}

We now argue that $\cE$ implies that $\ver{i}$ is added to $S$.

The first literal of $\cE$ ensures that $\tau \geq u_{\ver{k+1}}$ after time $1/2$ and, consequently, the only candidate with a true value higher than the threshold $\tau$ at any time $t \in [1/2, t_{\ver{i}}]$ is $\ver{i}$. That observation implies that conditions of \textit{case 4} are true only for $\ver{i}$.
We now argue that: (1) before time $t_{\ver{i}}$ less than $k$ candidates are added to $S$; and (2) the conditions of \textit{subcase 4b} are true for $\ver{i}$ at time $t_{\ver{i}}$. 

For (1) first note that: (a) initially we have $|S| = |B| = 0$, $|H| = k$; (b) at all times our algorithm maintains the invariant $B \cap H  = \emptyset$, $B \cup H  \subseteq [k]$; and (c) every time a candidate is added to the solution $S$ then a candidate is deleted from either $B$, as in \textit{case 1} and \textit{subcase 4a}, or $H$ as in \textit{case 2}, \textit{subcase3a}, and \textit{subcase 4b}. 
Thus, at all times  $|B| + |H| + |S|$ remains constant and since initially is equal to $k$ we conclude that at all times  $|B| + |H| + |S| = k$.
In addition, a candidate not yet arrived may be removed from $H$ only through \textit{subcase 4b}. Since we argued that conditions of \textit{case 4} are true only for $\ver{i}$, we have that right before $\ver{i}$'s arrival $j \in H $ and $|S| = k -| B| -|H| \leq k -|H| \leq k - 1 < k$.
For (2), to argue that conditions of \textit{subcase 4b} are met at time $t_{\ver{i}}$, it is enough to prove that $ j \in \{j': u_{\ver{i}} > \hu_{j'}  - \et{\ver{i}} \}$ (note that $j \in H$ and $\cE$ implies that $t_j > t_{r_i}$).
 To see this, note that by the definition of $\et{\ver{i}}$ it holds that $\et{\ver{i}} \geq \epsilon_{\ver{i}}$ and consequently, $\{j': u_{\ver{i}} > \hu_{j'}  - \et{\ver{i}} \} \supseteq \{ j':  u_{\ver{i}} > \hu_{j'} - \epsilon_{\ver{i}}\} \ni j$. 

Before proceeding to the second case we introduce the following notation:
\[
t_{\pegfunc{j}} =
\begin{cases}
    t_l & \text{if } \exists l: l = \pegfunc{j} \\
    \infty & \text{otherwise }
\end{cases}
\]
that is, if at time $t_j$ candidate $l$ is added to the pegging set $P$ then we use $t_{\pegfunc{j}}$ to denote the arrival time of candidate $l$. However, if at time $t_j$ no candidate is added to set $P$ then we define $t_{\pegfunc{j}}$ to be equal to $\infty$ so that the literal $\{t_{\pegfunc{j}} > x\}$ is true for every $x \in \Re$. 

We now analyze the case where $\{ u_j < \thu{i} + \epsilon_j \}$ is true and define the following event:
\begin{align*}
     \cE &= \bigwedge_{\{ r_1, \dots, r_{k+1} \} \setminus \{ \ver{i} , j \}} \{ t_{\ver{l}} < 1/3\} \land  \{1/3 < t_j < 1/2\}  \land  \{ t_{\ver{i}} > 1/2\}  \land \{ t_{\pegfunc{j}} \geq t_{\ver{i}} \}
\end{align*}

To simplify notation, let $P_j$ be the random variable denoting the pegging set at time $t_j$ before the execution of the while loop because of $j$'s arrival. We let $F_j = \{ j' \succ j : u_j < \hu_{j'} + \et{j}\} \setminus (P_j \cup [k])$ be the random variable which contains all candidates that could be ``pegged'' at time $t_j$.

In addition we define event $\cT$ as follows: 
\begin{align*}
    \cT =  \bigwedge_{\{ r_1, \dots, r_{k+1} \} \setminus \{ \ver{i} , j \}} \{ t_{\ver{l}} < 1/3\} \land  \{1/3 < t_j < 1/2\}  \land  \{ t_{\ver{i}} > 1/2\} 
\end{align*}

\newcommand{\tpegmore}{t_{\pegfunc{j}} \geq t_{\ver{i}}}
\newcommand{\Fjempty}{F_j = \emptyset}
\newcommand{\Fjnotempty}{F_j \neq \emptyset}

Before lower bounding the probability of event $\cE$ we argue that:
\begin{align*}
    \Prob{\tpegmore \mid \cT } \geq 2/3
\end{align*}

Let $\cF_j$ denote the set of all non-empty subsets of $[n]$ such that $\Prob{  F_j = f_j  \mid \cT  } > 0$. Note that $\Prob{ \Fjempty \mid \cT} +  \sum_{f_j \in \cF_j} \Prob{  F_j = f_j  \mid \cT  } = 1 $.

From the law of total probability we have:

\begin{align}
    &\Prob{\tpegmore \mid \cT} \\
    &=\Prob{ \{ \tpegmore \} \land \{ \Fjempty\} \mid \cT} +
    \Prob{ \{ \tpegmore \} \land \{ \Fjnotempty \} \mid \cT}\\
     &=\Prob{ \Fjempty \mid \cT} +
    \Prob{ \{ \tpegmore \} \land \{ \Fjnotempty \} \mid \cT}\\
    &=\Prob{ \Fjempty \mid \cT} +
    \sum_{f_j \in \cF_j} \Prob{  \{ \tpegmore \} \land \{ F_j = f_j \} \mid \cT  }\\
    &=\Prob{ \Fjempty \mid \cT} +
    \sum_{f_j \in \cF_j} \Prob{  t_{\pegfunc{j}} \geq t_{\ver{i}}  \mid \{ F_j = f_j \} \land \cT } \cdot \Prob{ F_j = f_j  \mid \cT  }
\end{align}

Where from (2) to (3) we use that if $\Fjempty$ then the condition of \textit{subcase3b} is false, thus no candidate is ``pegged'' and consequently $t_{\pegfunc{j}} = \infty$. 
From (3) to (4) we used that $\{ \Fjnotempty\} = \bigvee_{f_j \in \cF_j} \{ F_j = f_j\}$.
We now focus on lower bounding the summation term.

\begin{align*}
&\sum_{f_j \in \cF_j}  \Prob{ t_{\pegfunc{j}} \geq t_{\ver{i}}  \mid \{ F_j = f_j \} \land \cT } \cdot \Prob{ F_j = f_j  \mid \cT  } = \\
&\sum_{f_j \in \cF_j: \pegfunc{j} = \ver{i} } \Prob{  t_{\pegfunc{j}} \geq t_{\ver{i}}  \mid \{ F_j = f_j \} \land \cT } \cdot \Prob{ F_j = f_j  \mid \cT  } + \\
&+\sum_{f_j \in \cF_j: \pegfunc{j} \neq \ver{i} } \Prob{  t_{\pegfunc{j}} \geq t_{\ver{i}}  \mid \{ F_j = f_j \} \land \cT } \cdot \Prob{ F_j = f_j  \mid \cT  } = \\
&\sum_{f_j \in \cF_j: \pegfunc{j} = \ver{i} } 1 \cdot \Prob{ F_j = f_j  \mid \cT  } +\sum_{f_j \in \cF_j: \pegfunc{j} \neq \ver{i} } \Prob{  t_{\pegfunc{j}} \geq t_{\ver{i}}  \mid \{ F_j = f_j \} \land \cT } \cdot \Prob{ F_j = f_j  \mid \cT  }
\end{align*}

We proceed lower bounding the term $\Prob{  t_{{\pegfunc{j}}} \geq t_{\ver{i}}  \mid \{ F_j = f_j \} \land \cT } $ for all $f_j \in \cF_j: {\pegfunc{j}} \neq \ver{i}$. Note that the conditioning $\{ F_j = f_j \} \land \cT$ changes the distribution of random variables $t_{{\pegfunc{j}}}, t_{\ver{i}}$ as follows: $ t_{\ver{i}}$ is uniformly drawn from $[1/2, 1]$ and $t_{{\pegfunc{j}}}$ is uniformly drawn from $[z, 1]$ for some $z \in [1/3, 1/2]$ which equals the realization of the random variable $t_j$. We define a random variable $\tilde{t}_{{\pegfunc{j}}}$ which is stochastically dominated by $t_{{\pegfunc{j}}}$ and is drawn uniformly from $[1/3, 1]$ as follows: let $\tilde{t}$ be uniformly drawn from $[1/3, z]$ and $B \sim Bernoulli((z - 1/3) / (1/2 - 1/3))$ then we define:
\begin{align*}
    \tilde{t}_{{\pegfunc{j}}} = B \cdot \tilde{t} + (1 - B) \cdot t_{{\pegfunc{j}}}    
\end{align*}

Note that since $\tilde{t} \leq t_{{\pegfunc{j}}}$ then also $\tilde{t}_{{\pegfunc{j}}} \leq t_{{\pegfunc{j}}}$ holds almost surely.

Therefore we have:
\begin{align*}
    \Prob{  t_{{\pegfunc{j}}} \geq t_{\ver{i}}  \mid \{ F_j = f_j \} \land \cT } &\geq \Prob{  \tilde{t}_{{\pegfunc{j}}} \geq t_{\ver{i}}  \mid \{ F_j = f_j \} \land \cT }\\
    &\geq 3/8 \\
\end{align*}

We proceed to bound the initial summation as follows:

\begin{align*}
&\sum_{f_j \in \cF_j}  \Prob{ t_{l_{f_j}} \geq t_{\ver{i}}  \mid \{ F_j = f_j \} \land \cT } \cdot \Prob{ F_j = f_j  \mid \cT  } = \\
& = \sum_{f_j \in \cF_j: l_{f_j} = \ver{i} } 1 \cdot \Prob{ F_j = f_j  \mid \cT  } +\sum_{f_j \in \cF_j: l_{f_j} \neq \ver{i} } \Prob{  t_{l_{f_j}} \geq t_{\ver{i}}  \mid \{ F_j = f_j \} \land \cT } \cdot \Prob{ F_j = f_j  \mid \cT  }\\
&\geq \sum_{f_j \in \cF_j: l_{f_j} = \ver{i} } 1 \cdot \Prob{ F_j = f_j  \mid \cT  } +\sum_{f_j \in \cF_j: l_{f_j} \neq \ver{i} } (3/8) \cdot \Prob{ F_j = f_j  \mid \cT  }\\
&\geq (3/8) \sum_{f_j \in \cF_j} \Prob{ F_j = f_j  \mid \cT  }
\end{align*}

We then have:
\begin{align}
    &\Prob{\tpegmore \mid \cT} \\
    &=\Prob{ \Fjempty \mid \cT} +
    \sum_{f_j \in \cF_j} \Prob{  t_{l_{f_j}} \geq t_{\ver{i}}  \mid \{ F_j = f_j \} \land \cT } \cdot \Prob{ F_j = f_j  \mid \cT  }\\
    &\geq \Prob{ \Fjempty \mid \cT} + (3/8) \sum_{f_j \in \cF_j} \Prob{ F_j = f_j  \mid \cT  }\\
    &\geq (3/8) \cdot \left( \Prob{ \Fjempty \mid \cT} +  \sum_{f_j \in \cF_j} \Prob{ F_j = f_j  \mid \cT  } \right)\\
    &=3/8
\end{align}

 We are now ready to lower bound the probability of event $\cE$ as follows
\begin{align*}
    \Prob{\cE} &= \Prob{ \cT} \cdot \Prob{t_{\pegfunc{j}} \geq t_{\ver{i}} \mid \cT} \\
                &\geq \Prob{ \cT} \cdot (3/8) \\
                 &= \prod_{\{ r_1, \dots, r_{k+1} \} \setminus \{ \ver{i}, j\}} \Prob{ t_l < 1/3} 
                 \cdot \Prob{ 1/3 < t_j < 1/2} 
                 \cdot (3/8) \\
                  &\geq (1/3)^{k} \cdot (1/2 - 1/3) \cdot (3/8)\\
                  &= (1/3)^{k+3}\\
\end{align*}

Similar to the analysis of the first case the first literal of $\cE$ ensures that the only candidate which may be accepted after time $1/2$ without being at any point in time in the pegging set $P$ is  $\ver{i}$. In addition, since $t_j < 1/2$ then we have that $j$ remains in $H$ until at least time $t_j$. Indeed, a candidate in $H$ that has not arrived yet may be removed from set $H$ only through \textit{case 4b} which happens exclusively after time $1/2$. We now analyze  $\cE$'s implications regarding the execution of our algorithm at $j$'s arrival by distinguishing between two mutually exclusive cases, that is whether $\ver{i}$ is in $P$ before time $t_j$ or not.
 
 If $\ver{i}$ is not in the pegging set exactly before time $t_j$ then we have that the conditions of \textit{case 3b} are true. Indeed note that since $\epsilon_j \leq \et{j}$ we have:
\begin{align*}
\{ j' \succ j : u_j < \hu_{j'} + \et{j}\} \setminus (P \cup [k]) \supseteq  
\{ j' \succ j : u_j < \hu_{j'} + \epsilon_{j}\} \setminus (P \cup [k])  \ni \ver{i}
\end{align*}
Thus, we are in the case where at time $t_j$ a candidate (which may be $\ver{i}$) is added to the pegging set, candidate $j$ is added to $B$ and $t_{\pegfunc{j}} < \infty $. Due to the literal $t_{\pegfunc{j}} \geq t_{\ver{i}}$ of $\cE$ and the fact that the only candidate which may be accepted after time $1/2$ without being at any point in time in the pegging set $P$ is  $\ver{i}$, we can deduce that at time $t_{\ver{i}}$ $j$ is still in $B$. Thus, at time $t_{\ver{i}}$ the conditions of \textit{subcase 4a} are true and $\ver{i}$ is added to $S$.
If $\ver{i}$ is in the pegging set exactly before time $t_j$ then since the conditions of \textit{case 4} are false for any candidate except possibly $\ver{i}$ we can deduce that at time $t_{\ver{i}}$, candidate $\ver{i}$ is still in the pegging set $P$ and is added to the solution through \textit{case 1}.

Combining all the different lower bounds on $\Prob{\ver{i} \in S}$ we conclude the lemma.
\end{proof}

\begin{lemma}\label{lem: large k robustness for the k pegging algorithm}
For all $i \in \{ 1, 2, \dots,  k \} $: $\Prob{   \ver{i} \in S   } \geq  \frac{ 1 - \frac{i+13}{k}}{256}$
\end{lemma}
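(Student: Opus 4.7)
The plan is to adapt the approach from Lemma~\ref{lem: robustness for the k pegging algorithm}, replacing the wasteful union bound over the arrival times of all $k+1$ top-predicted candidates (which costs a factor of $(1/3)^k$) with a concentration-style argument that only requires \emph{most} of them to arrive early. The target bound $(1-(i+13)/k)/256$ has the shape ``constant probability times a Markov factor,'' suggesting that the $(i+13)/k$ term should arise from an expected count of ``bad'' candidates in a specific set of size roughly $i+O(1)$, divided by $k$, while $1/256$ comes from a handful of $1/2$-probability arrival-window events that will combine independently.

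I would again split into the cases $\ver{i} \in [k]$ and $\ver{i} \notin [k]$. In the former, $\ver{i}$ starts in $H$ and is immune to being pegged out via Case~3b because the pegging target set $\{j' \succ j : u_j < \hu_{j'} + \et{j}\} \setminus (P \cup [k])$ explicitly excludes $[k]$. Hence $\ver{i}$ can leave $H$ only by being accepted in Case~2 (the desired outcome) or by being removed via Case~4b by some later-arriving $\ell \notin [k]$ with $u_\ell > \hu_{\ver{i}} - \et{\ell}$. I would build an event that (a) places $t_{\ver{i}}$ in a favorable window after $1/2$ and (b) rules out any Case~4b firing on $\ver{i}$ before $t_{\ver{i}}$. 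A Markov inequality on the number of top-true-value candidates arriving in the ``late'' half of time, together with a small constant number of additional potential blockers, produces the $(i+13)/k$ term.

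For $\ver{i} \notin [k]$, I would follow the dichotomy $\{u_j < \hu_{\ver{i}} + \epsilon_j\} \lor \{u_{\ver{i}} > \hu_j - \epsilon_{\ver{i}}\}$ established in Lemma~\ref{lem: robustness for the k pegging algorithm} for a scapegoat $j \in [k]$ with $u_j < u_{\ver{i}}$. In each subcase I would construct an event that (i) arranges the arrival times of $j$, $\ver{i}$, and the pegged candidate in the prescribed relative order (contributing a factor of the form $1/2^c$), and (ii) ensures via a Markov-type bound that at most $i + O(1)$ of the top-predicted candidates arrive outside the desired window. A step I expect to recycle from Lemma~\ref{lem: robustness for the k pegging algorithm} is the stochastic-domination trick that replaces the conditional distribution of the pegged candidate's arrival time with a uniform draw from a larger interval; this argument should survive the weaker conditioning.

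The main obstacle will be verifying that the good event truly forces $\ver{i} \in S$ despite no longer requiring all top-predicted candidates to arrive before $1/2$. When some elements of $[k]$ arrive after $1/2$, the interplay between Case~4a (which removes an element of $B$ and its pegged partner from $P$) and Case~4b (which removes an element from $H$) can plausibly block $\ver{i}$ from being pegged or accepted. I expect the verification to lean heavily on the invariants $|S|+|H|+|B|=k$ and $|P|=|B|$ maintained throughout \kpegging, using them to track which slots in $S$ remain free at each stage and to argue that the Markov-controlled ``bad'' arrival events are the only obstructions to $\ver{i}$'s eventual acceptance.
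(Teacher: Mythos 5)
There is a genuine gap here: your plan never identifies the object that actually makes a bound of the form $\frac{1-(i+13)/k}{256}$ reachable, and the mechanism you propose for the $i$-dependence lives in the wrong set of candidates. In the paper's proof the $i$-dependence comes from the \emph{replacement set} $R=\{j\in[k]: u_j<u_{\ver{i}}\}$, i.e.\ the top-$k$-\emph{predicted} candidates whose true value is \emph{below} $u_{\ver{i}}$; its size is at least roughly $k-i\approx\delta'k$ with $\delta'=1-(i+1)/k$, and the guarantee degrades as $i\to k$ precisely because this pool dries up. You instead locate the $(i+13)/k$ term in a Markov bound on ``bad'' arrivals among a set of size $i+O(1)$ (the candidates \emph{above} $\ver{i}$). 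That cannot produce the stated bound: to make the expected number of bad arrivals be of order $(i+13)/k$ you need a window of width $\Theta(1/k)$, whose placement already costs an extra factor $1/k$ and destroys the constant; with a constant-width window the expected bad count is $\Theta(i)$, not $i/k$; and a bound of the form $1-\mathbb{E}[\text{bad}]/(i+O(1))$ even has the wrong monotonicity in $i$. Likewise, redoing the Lemma~\ref{lem: robustness for the k pegging algorithm} dichotomy with a single scapegoat $j\in[k]$ is fragile once top-predicted candidates are allowed to arrive late: the threshold $\tau$ after time $1/2$ is no longer pinned above $u_{\ver{k+1}}$, many candidates can trigger Case 4, and the adversarial choice in subcase 4b can eject your one scapegoat (or $\ver{i}$ itself) from $H$ before it is of any use. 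This is exactly the obstacle you flag in your last paragraph, and it is the part your sketch leaves unresolved.

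For comparison, the paper's argument is unified (no split on $\ver{i}\in[k]$) and needs neither the dichotomy nor the pegging-time stochastic-domination step. Every replacement $j\in R$ automatically satisfies the subcase-4b inequality, since $u_{\ver{i}}>u_j\ge\hu_j-\epsilon_j$; one only conditions on the worst-error member $j^{worse}$ of $R$ arriving before $\ver{i}$ so that $\et{\ver{i}}\ge\epsilon_j$. The event then places $t_{\ver{i}}$ in the window $(1/2,\,1/2+\delta)$ with $\delta=\delta'/16$ and, via a coupling/Markov counting argument, ensures that the number of high-value ``interlopers'' arriving in $[1/2,1/2+\delta]$ (each of which can remove at most one not-yet-arrived candidate from $H$ through subcase 4b) is strictly smaller than the number of replacements arriving after $1/2+\delta$. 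Hence at time $t_{\ver{i}}$ some replacement is still in $H$, which simultaneously guarantees $|S|<k$ (by $|S|+|H|+|B|=k$) and makes $\ver{i}$ accepted, via Case 2 if it is still in $H$ and via subcase 4b otherwise. The factor $1-(i+13)/k$ is exactly the window width $\delta\propto\delta'$, i.e.\ the relative size of the replacement pool, and the $1/256$ collects the constants $\delta'/16\cdot\frac12\cdot\frac14\cdot\frac12$. Without the replacement set and this $\delta'$-scaled window, the approach you outline does not reach the claimed bound.
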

\begin{proof}

Let $\delta' > 12/k$ be such that $i = (1 - \delta')k -1 $. We now argue that proving $\Prob{   \ver{i} \in S   } \geq \delta'/256$ suffices to prove the lemma. 

First, we underline that such a $\delta '$ exists only for $i < k -13$. Indeed, 
\begin{align*}
    i = (1 - \delta')k -1  \xRightarrow{}  \delta' = 1 - \frac{i+1}{k} \xRightarrow{\delta' > 12/k} i < k -13
\end{align*}
For all $i < k -13$ we have:
\begin{align*}
    \Prob{   \ver{i} \in S   } \geq \delta'/256 > \frac{ 1 - \frac{i+1}{k}}{256} >  \frac{ 1 - \frac{i+13}{k}}{256}
\end{align*}

For $i \geq k - 13$ the statement of the lemma is vacuous, since:
\begin{align*}
    \Prob{   \ver{i} \in S   } \geq 0 \geq \frac{ 1 - \frac{i+13}{k}}{256}
\end{align*}
Consequently, from now on we focus on proving that $\Prob{   \ver{i} \in S   } \geq \delta'/256$.
We do so by defining an event $\cE$ for which $\Prob{\cE} \geq \delta'/256$ and argue that $\cE$ implies $\ver{i}$ being accepted.

Before defining $\cE$ we need to introduce some auxiliary notation.
We call replacement set and denote by $R$ the set of indexes initially in $H$ with value lower than $u_{\ver{i}}$, i.e.,  $R = \{ j : \tu{i} > u_j\} \cap [k]$ and by $j^{worse} = \argmax_{j \in R} \epsilon_j $ the index of the candidate with the highest error in $R$. For any $t, t' \in [0,1]$ we define the random variable  $A_{t,t'} = \{ j: t \leq t_j \leq t'\} \setminus \{ \ver{i}, j^{worse} \}$ which contains all indexes except $\ver{i}$ and $j^{worse}$ of candidates arrived between times $t$ and $t'$. 
Also, for $x \in [n]$ we define the set function $L_x : 2^{[n]} \xrightarrow{} 2^{[n]} $, such that for any subset $Y \subseteq [n]$, $L_x (Y)$ contains the $x$ indexes with highest true value in $Y$.
For $\delta \in (0, 1/2)$ let (a) $R_1 = R \cap A_{0, 1/2 + \delta}$ and $R_2 = R \cap A_{1/2 + \delta, 1} $ be the random variables denoting all candidates of $R \setminus \{ j^{worse}\}$ arriving before and after time $1/2 + \delta$ respectively; and (b) let $M = L_{\floor{(1 + 4 \delta)k}} (A_{0, 1/2+\delta}) \cap A_{1/2, 1/2 + \delta}$ denote the random variable containing candidates which arrived between times $1/2$ and $1/2 + \delta$ with the  $\floor{(1 + 4 \delta)k}$ higher true value among the ones arrived before time $1/2 + \delta$ (excluding $\ver{i}$ and $j^{worse}$).

We now define event $\cE$. Let  $\delta = \delta'/16$:
\begin{align*}
    \cE =  \left\{|R_2| \geq \frac{1/2 - 2\delta}{1 - \delta} \delta' k\right\} \land \{|M| < 4 \delta k \} \land \{ 1/2 < t_{\ver{i}} < 1/2 + \delta \} \land \{ t_{j^{worse}} <  t_{\ver{i}} \}
\end{align*}

The literal $\{|M| < 4 \delta k \}$ implies that at most $4 \delta k$ candidates not in $H$ or not ``pegged'' by a previously arrived candidate in $H$ may be added to our solution between times $1/2$ and $1/2 + \delta$. In addition, each candidate in $M$ through \textit{subcase 4b} may delete from $H$ at most one candidate with arrival time after $1/2 + \delta$. Consequently, the number of candidates in $R$ that are in $H$ until time $1/2 + \delta$ are at least $|R_2| - |M|$. We first argue that $\{|R_2| \geq \frac{1/2 - 2\delta}{1 - \delta} \delta' k\}$ and $|M| < 4 \delta k$ implies that $|R_2| - |M| > 1$, i.e., until time $1/2 + \delta$ at least one candidate from the replacement set $R$ is still in $H$ (note that initially we have $R \subseteq H = [k]$.
\begin{align*}
    |R_2| - |M| &> \frac{1/2 - 2\delta}{1 - \delta} \delta' k - 4 \delta k\\
    &> \frac{1}{3} \delta' k - 4 \delta k\\\
    &>  ( \delta'/3  - 4 \delta) k\\
    &\geq ( \delta'/3  - 4 \delta'/16)k\\
    &\geq ( \delta'/12) k\\
    &> 1
\end{align*}
where in the second inequality we used that $\delta = \delta' /16 \leq 1 /16 < 1/10$.

%%%%%
We continue arguing that $\cE$ implies that the conditions of \textit{subcase 4b} are true at time $t_{\ver{i}}$. For every candidate $j \in R_2$ it holds that:
\begin{align*}
    u_{\ver{i}} > u_j \geq \hu_j - \epsilon_j \geq \hu_j - \epsilon_{j^{worse}} \geq \hu_j - \et{\ver{i}}
\end{align*}
Where the first inequality comes from the definition of set $R$ since $R_2 \subseteq R$, the second from the definition of the error, the third from the definition of $j^{worse}$ and the last from the fact that $t_{j^{worse}} < t_{\ver{i}}$ implies $\et{\ver{i}} \geq \epsilon_{j^{worse}}$. Thus, we have that at time $t_{\ver{i}}$ there is at least one candidate $j$ in $H$ for which $u_{\ver{i}} \geq \hu_j - \et{\ver{i}}$, consequently the conditions of \textit{subcase 4a} are true, and candidate $\ver{i}$ is added to our solution.

We now lower bound the probability of event $\cE$. 
Since $R_2$ and $M$ do not contain neither $\ver{i}$ nor $j^{worse}$ we have that events $\{|R_2| \geq \frac{1/2 - 2\delta}{1 - \delta} \delta' k\}$, $\{|M| < 4 \delta k \}$ are independent of the time arrival of $j^{worse}$ and $\ver{i}$. Thus, 

\begin{align*}
    \Prob{\cE} &= \Prob{\{|R_2| \geq \frac{1/2 - 2\delta}{1 - \delta} \delta' k\} \land \{|M| < 4 \delta k \}  } \cdot \Prob{ 1/2 < t_{\ver{i}} < 1/2 + \delta} \cdot \Prob{ t_{j^{worse}} <  1/2 }\\
    &= \Prob{\{|R_2| \geq \frac{1/2 - 2\delta}{1 - \delta} \delta' k\} \land \{|M| < 4 \delta k \}  } \cdot \delta  \cdot (1/2) \\
    &= (\delta' /32) \cdot \Prob{\{|R_2| \geq \frac{1/2 - 2\delta}{1 - \delta} \delta' k\} \land \{|M| < 4 \delta k \}  }\\
    &= (\delta' /32) \cdot \Prob{\{|R_1| < |R| - \frac{1/2 - 2\delta}{1 - \delta} \delta' k\} \land \{|M| < 4 \delta k \}  }
\end{align*}

We continue by lower bounding the second term of the last expression. We do so by defining a random variable $\cM$ which is independent of $R_1$ and it is such that $\cM$ stochastically dominates $|M|$.  We prove the stochastic dominance of $\cM$ using a coupling argument.

Note that every candidate $i$ accepts an arrival time $t_i$ uniformly at random from $[0,1]$. We now describe an equivalent procedure to create the arrival times $t_i$. Each candidate $i$ draws three independent random variables $B_i \sim Bernoulli(1/2+\delta)$, $t_i^1\sim Uniform([0, 1/2 + \delta])$ and $t_i^2\sim Uniform([1/2 + \delta, 1])$. Note that we can construct random variables $t_i$ using $B_i, t_i^1$ and $t_i^2$ as follows:
\begin{align*}
	t_i = B_i \cdot t_i^1 + (1 - B_i) \cdot t_i^2
\end{align*}

Let $l = |L_{(1 + 4 \delta) k } ( A_{0, 1/2 + \delta})|$  and denote by $h_1, \dots, h_l$ the set of candidates in $L_{(1 + 4 \delta) k } ( A_{0, 1/2 + \delta})$. We define random variables $\tilde{t}_1, \dots, \tilde{t}_{(1 + 4 \delta) k}$ as follows:
For each $j \in \{ 1, \dots, l \}$ we define $\tilde{t}_j =  t_{h_j}^1 $ and for each $j \in \{ l+1, \dots, 1 + 4 \delta k \}$ we define 
$\tilde{t}_j \sim Uniform([0, 1/2 + \delta])$. We define $\cM = \sum_{i =1}^{\floor{(1 + 4 \delta )k} } \Ind{\tilde{t}_j  > 1/2}$ and 
since $\cM - |M| = \sum_{i = l+1}^{ \floor{(1 + 4 \delta) k} } \Ind{\tilde{t}_j  > 1/2}$ we have that $\cM \geq |M|$ almost surely. Note that
random variables $\tilde{t}_j$ and random variables $B_i$ are independent. Consequently, since $\ |R_1|  =  \sum_{i \in R} B_i $ we have that events $\{ \cM < y\}$  and $\{  |R_1| < x \} $ are independent. Combining these observations we have:
\begin{align*}
   & \Prob{\{|R_1| < |R| - \frac{1/2 - 2\delta}{1 - \delta} \delta' k\} \land \{|M| < 4 \delta k \}  } \\
    \geq & \Prob{\{|R_1| < |R| - \frac{1/2 - 2\delta}{1 - \delta} \delta' k\} \land \{\cM < 4 \delta k \}  }\\
    \geq & \Prob{|R_1| < |R| - \frac{1/2 - 2\delta}{1 - \delta} \delta' k} \cdot  \Prob{\cM < 4 \delta k }\\
   \geq  & \Prob{|R_2| \geq \frac{1/2 - 2\delta}{1 - \delta} \delta' k} \cdot  \Prob{\cM < 4 \delta k }\\
\end{align*}

To upper bound $\Prob{\cM < 4 \delta k}$ note that $\Expect{\cM} = \floor{(1 + 4 \delta)k} \cdot \frac{\delta}{1/2 + \delta} < 3 \delta k$, where the last inequality holds since $1/2 > 1/16 \geq \delta'/16 = \delta$. From Markov's inequality, we have that:

\begin{align*}
    \Prob{  \cM > 4 \delta k} &=  \Prob{  \cM > (4/3) \cdot 3 \delta k}  \\
    &\leq \Prob{  \cM > (4/3) \cdot \Expect{\cM} } \\
    &\leq (3/4)\\
\end{align*}
Consequently
\begin{align*}
    \Prob { \cM < 4 \delta k} > (1/4)\\
\end{align*}

Note that since $i < (1 - \delta')k -1$ we have that  $|R| \geq \delta ' k + 1$.
\begin{align*}
    \Expect{ |R_2| } = |R\setminus \{ j^{worse}\}| (1 - 1/2 - \delta) \geq \delta ' k  (1/2 - \delta)
\end{align*}
From Markov's inequality, we have that:
\begin{align*}
    \Prob{ |R_2| > \frac{1/2 - 2\delta}{1 - \delta} \delta' k|} &\geq  
    \Prob{ |R_2| > \frac{1/2 - 2\delta}{(1 - \delta) \cdot (1/2 - \delta)} \cdot \Expect{ |R_2| } }\\
    &\geq \frac{(1 - \delta) \cdot (1/2 - \delta)}{1/2 - 2\delta}\\
    &= \frac{(1 - \delta) \cdot (1 - 2\delta)}{1 - 4\delta}\\
    &\geq (1 - \delta)
\end{align*}
Consequently,
\begin{align*}
 \Prob{\cE} &\geq (\delta' /32) \cdot (1/4) \cdot (1 - \delta) \\
  &\geq (\delta' /32) \cdot (1/4) \cdot (1/2)\\
  &\geq (\delta' /256)
\end{align*}

\end{proof}]

\maintheoremksecretaries*
\begin{proof}
The theorem follows directly from~\Cref{lem: consistency for the k pegging algorithm,lem: large k robustness for the k pegging algorithm,lem: robustness for the k pegging algorithm}.
\end{proof}

%%%%%%%%%%%%%%%%%%%%%%%%%%%%%%%%%%%%%%%%%%%%%%%%%%%%%%%%%%%%

\end{document}